\documentclass{article} % For LaTeX2e
\usepackage{iclr2026_conference,times}

\usepackage{amsmath,amsfonts,bm}

\def\eqref#1{equation~\ref{#1}}
\def\1{\bm{1}}

\DeclareMathAlphabet{\mathsfit}{\encodingdefault}{\sfdefault}{m}{sl}
\SetMathAlphabet{\mathsfit}{bold}{\encodingdefault}{\sfdefault}{bx}{n}

\usepackage{booktabs}
\usepackage{graphicx}
\usepackage{amsmath}
\usepackage{amssymb}
\usepackage{physics}
\usepackage{nicefrac}

\usepackage{multirow}
\usepackage{multicol}

\usepackage{color}

\usepackage{listings}
\usepackage{algorithm}

\usepackage[normalem]{ulem}

\usepackage{hyperref}
\usepackage[capitalize]{cleveref}
\crefname{algorithm}{Algorithm}{Algorithms}
\usepackage{url}
\usepackage{xcolor}     % for row coloring
\usepackage{colortbl}   % for row coloring
\usepackage[utf8]{inputenc} % allow utf-8 input
\usepackage[T1]{fontenc}    % use 8-bit T1 fonts
\usepackage{hyperref}       % hyperlinks
\usepackage{url}            % simple URL typesetting
\usepackage{amsthm}
\usepackage{booktabs}       % professional-quality tables
\usepackage{multirow}
\usepackage{multicol}
\usepackage{amsfonts}       % blackboard math symbols
\usepackage{nicefrac}       % compact symbols for 1/2, etc.
\usepackage{microtype}      % microtypography
\usepackage[normalem]{ulem}
\newtheorem{theorem}{Theorem}[section]
\newtheorem{lemma}[theorem]{Lemma}

\usepackage{physics}
\usepackage{algorithm}
\usepackage{algorithmic}
\usepackage{graphicx}
\usepackage[table]{xcolor}
\usepackage{subcaption}
\usepackage{placeins}

\definecolor{lightblue}{RGB}{220, 235, 255}

\title{Gradient-Aligned Calibration for Post-Training Quantization of Diffusion Models}

\author{
Dung Anh Hoang, Cuong Pham, Trung Le, Jianfei Cai, Thanh-Toan Do \\
Department of Data Science and AI, Monash University \\
Australia
}

\iclrfinalcopy % Uncomment for camera-ready version, but NOT for submission.
\begin{document}

\maketitle

\begin{abstract}
    Diffusion models have shown remarkable performance in image synthesis by progressively estimating a smooth transition from a Gaussian distribution of noise to a real image. Unfortunately, their practical deployment is limited by slow inference speed, high memory usage, and the computational demands of the noise estimation process. Post-training quantization (PTQ) emerges as a promising solution to accelerate sampling and reduce the memory overhead of diffusion models.  Existing PTQ methods for diffusion models typically apply uniform weights to calibration samples across timesteps, which is sub-optimal since data at different timesteps may contribute differently to the diffusion process. Additionally, due to varying activation distributions and gradients across timesteps, a uniform quantization approach is sub-optimal. Each timestep requires a different gradient direction for optimal quantization, and treating them equally can lead to conflicting gradients that degrade performance. In this paper, we propose a novel PTQ method that addresses these challenges by assigning appropriate weights to calibration samples. 
    Specifically, our approach learns to assign optimal weights to calibration samples to align the quantized model’s gradients across timesteps, facilitating the quantization process.  Extensive experiments on CIFAR-10, LSUN-Bedrooms, and ImageNet datasets demonstrate the superiority of our method compared to other PTQ methods for diffusion models.
\end{abstract}
    
 \section{Introduction}

In recent years, diffusion models have become a prominent framework for high-quality image synthesis~\citep{ho2020denoising, dhariwal2021diffusion, rombach2022high}. Despite their effectiveness, the practical deployment of diffusion models is hindered by the substantial computational cost associated with the sampling procedure, which typically involves hundreds of iterative denoising steps. Furthermore, the noise estimation networks employed in these models are often composed of complex architectures with a large number of parameters, making them resource-intensive and difficult to deploy on devices with limited computational or memory capacity.

Quantization has become a widely adopted approach for reducing the computational and memory cost of deep neural networks \citep{sharpness_data_generation,bitshrinking, Genie,metaaug}. More recently, this paradigm has been extended to diffusion models, where approximating network weights and activations using reduced-precision representations \citep{Q-diffusion, PTQD, TFMQ-DM, PTQ4DM, APQ-DM} enables substantial reductions in memory footprint and computational overhead, with only minor performance degradation. In particular, PTQ stands out as a practical method for adapting diffusion models to low-resource settings, as it allows for model compression without revisiting the training process or relying on the original dataset. In particular, PTQ stands out as a practical method for adapting diffusion models to low-resource settings, as it allows for model compression without revisiting the training process or relying on the original dataset.

In the context of post-training quantization for diffusion models, calibration data plays a critical role in guiding the quantization process and is typically collected from various stages of the denoising trajectory. For instance, Q-Diffusion~\citep{Q-diffusion} adopts a fixed-interval selection strategy, collecting samples uniformly across the entire set of denoising steps. On the other hand, in PTQ4DM~\citep{PTQ4DM}, a number of timesteps are sampled from a Gaussian distribution, and the images generated at these timesteps are then used as calibration data. Building upon this, TFMQ-DM~\citep{TFMQ-DM} follows the same sampling strategy as Q-Diffusion and further introduces a method designed to preserve temporal feature consistency during quantization. %A common feature of existing methods~\citep{PTQ4DM, Q-diffusion, TFMQ-DM} is the assumption that all calibration samples are equally important in the quantization process. However, existing works on diffusion models claim the opposite. For instance, ~\ref{diffusion_timestep_claim} observes that samples' loss gradient norms are highly dependent on the timestep. This trend leads to a prominent bias in influence estimation, particularly for samples trained on large-norm-inducing timesteps, causing them to be generally more influential. On the other hand, \ref{Wang2024ACL} claim that time steps can be empirically divided into acceleration, deceleration, and convergence areas based on the process increment, with different contribution to the diffusion process.\blue{incomlete sentence}
A common assumption in existing quantization methods for diffusion models ~\citep{PTQ4DM,Q-diffusion,TFMQ-DM} is that all calibration samples contribute equally to the quantization process. However, recent research on diffusion models challenges this notion by demonstrating that sample importance varies significantly across timesteps. For example, \citep{diffusion_timestep_claim} shows that the loss gradient norms of samples are highly dependent on their associated timesteps, introducing a systematic bias in influence estimation. Samples corresponding to timesteps with larger gradient norms tend to exert a disproportionately higher impact on the model. Similarly, \citep{Wang2024ACL} empirically categorize timesteps into acceleration, deceleration, and convergence phases based on process increments, each contributing differently to the model's learning dynamics.

On the other hand, since activations and gradients vary significantly across timesteps, calibration data from different timesteps can be interpreted as representing distinct tasks with divergent gradient dynamics. Prior works on diffusion model training have highlighted the challenge of gradient conflict, which arises when optimization directions across timesteps interfere with each other. For example, ~\citep{Hang2023EfficientDT} frames diffusion model training as a multi-task problem, showing that optimizing the denoising objective at a specific noise level can degrade performance at others. Similarly, ~\citep{Go2023AddressingNT} observes that negative transfer can occur due to conflicting gradients across timesteps. In the context of quantization, this challenge becomes more pronounced due to the discrete nature of the parameter space. Quantized models with binary constraints lack the flexibility to represent intermediate values, forcing parameters to take discrete values such as 0 or 1. Unlike full-precision models, which can mitigate conflicting gradient signals by adjusting parameters incrementally, quantized models cannot resolve such conflicts effectively. As a result, when gradients from different timesteps compete, the model may incur large losses in directions where no suitable quantized value exists, leading to uneven performance across timesteps. %When such conflicts arise, the model may be unable to sufficiently reduce the loss for all timesteps, leading to significant errors in directions that are not well aligned with the limited set of model parameters. 
Consequently, improving performance at one timestep may inherently degrade performance at others due to representational trade-offs. %This can lead to conflicting gradients.
%Calibration data plays a crucial role in PTQ for diffusion models and is typically generated from various time steps of the denoising process. There are several works that use heuristics to select calibration data for PTQ on diffusion models. For example, in PTQ4DM~\citep{PTQ4DM}, the authors sample denoising time steps from a distribution $\mathcal{N}(\mu, 0.5T)$ where $\mu \leq 0.5T$, and use images generated at these sampled time steps as calibration data. In Q-Diffusion~\citep{Q-diffusion}, the authors select generated images at fixed step intervals across all denoising time steps as calibration data.  In TFMQ-DM~\citep{TFMQ-DM}, the authors adopt the Q-Diffusion method to construct the calibration data and propose a temporal feature maintenance quantization framework to improve the performance of the PTQ for DMs. It is worth noting that in previous works~\citep{PTQ4DM, Q-diffusion,TFMQ-DM}, calibration samples are treated equally during the quantization process.Different from previous works~\citep{PTQ4DM, Q-diffusion,TFMQ-DM}, we hypothesize that each calibration sample could have different contributions to the performance of the quantized model. Therefore, we propose to learn to weight calibration samples {during the quantization process.}

 %This observation suggests that calibration samples from different timesteps contribute distinct optimization signals, especially in the later stages, and highlights the potential benefit of treating quantization across timesteps as a set of related but non-identical subtasks.

To this end, we propose a novel meta-learning–based approach that dynamically assigns importance weights to calibration samples during the quantization process. Our goal is to calibrate the quantized model using a weighted sample set that not only achieves strong validation performance but also promotes alignment between gradients from different timesteps. %Given that the model is quantized in a block-wise manner, the sample weights are updated per block to facilitate smoother calibration in subsequent blocks.
We formulate this as a bi-level optimization problem, learning sample weights such that the calibrated model maintains gradient consistency and improves adaptability during the quantization process. By aligning gradient directions and emphasizing samples that contribute most effectively, our method enhances gradient propagation and overall quantization quality. We validate our proposed approach on the widely used CIFAR-10~\citep{cifar}, LSUN-Bedrooms~\citep{lsun} and ImageNet~\citep{deng2009imagenet} datasets with various noise estimation network architectures under different bit-width settings. The extensive experiments demonstrate that our method outperforms the state-of-the-art PTQ methods for diffusion models. The contributions of this work can be summarized as follows:

\begin{itemize}
    \item We are the first to identify the issue of gradient conflict during post-training quantization of diffusion models, where calibration samples from different timesteps may induce inconsistent optimization directions.
    
    \item  We introduce the first PTQ framework for diffusion models that leverages gradient alignment to learn sample-wise importance weights for calibration data. By emphasizing samples with coherent gradient directions across timesteps, our method enhances quantization effectiveness.
    
    \item Extensive experiments on CIFAR-10, LSUN-Bedrooms, and ImageNet demonstrate that our approach consistently achieves superior FID scores compared to prior PTQ techniques for diffusion models.
\end{itemize}

\section{Related works}
\paragraph{Post-training quantization on diffusion Models.}
Diffusion models~\citep{ho2020denoising,song2020score} have emerged as a powerful generative framework, capable of producing high quality images through iterative refinement of noisy inputs. Despite their impressive results, the sheer number of inference steps required in the denoising trajectory poses a substantial bottleneck for real-world deployment.   While acceleration techniques~\citep{lu2022dpm,song2021denoising,zhao2023unipc} have been introduced to reduce inference time, these methods often remain resource-intensive due to the size and complexity of the underlying noise estimation networks. To alleviate these overhead, model compression techniques, particularly model quantization~\citep{Q-diffusion,PTQD,APQ-DM,TFMQ-DM,PTQ4DM,EfficientDM}, offer a promising solution for diffusion models, by minimizing both computational and memory footprints. Among these, the post-training quantization (PTQ)\citep{Q-diffusion, TFMQ-DM} has gained much attention as a practical approach that does not require full model retraining. 
\vspace{-0.3cm}
\paragraph{Data optimization for diffusion model quantization.}
PTQ methods for diffusion models typically rely on generated calibration data and effective quantization strategies. Recent efforts in this direction have primarily focused on sampling strategies for calibration, aiming to select optimal calibration data for the quantization process.  For instance, APQ-DM~\citep{APQ-DM} adopts a principled time-step selection strategy rooted in structural risk minimization to guide the generation of calibration inputs. On the other hand, PTQ4DM\citep{PTQ4DM} demonstrates that calibrating quantized models with samples generated from the denoising process leads to superior results compared to using samples from the forward process. Building on this idea, Q-Diffusion~\citep{Q-diffusion} samples intermediate results at fixed intervals and introduces a novel shortcut-aware quantization technique to improve the quantized model's performance across different benchmarks. 

While existing PTQ methods typically assign equal weight to all calibration samples, this overlooks important characteristics of diffusion models. Prior studies ~\citep{nichol2021improvedDDPM, zhang2022gddim} have shown that different timesteps contribute unequally to the generative process. For instance, later timesteps tend to capture higher-level semantic structures, while earlier ones focus more on denoising low-level details. Treating all timesteps uniformly may therefore dilute the influence of more impactful samples, leading to suboptimal quantization.

Moreover, samples from different timesteps follow distinct distributions and can be seen as separate subtasks with divergent learning dynamics. Uniformly optimizing over the entire training set may induce conflicting gradient signals across timesteps, resulting in performance trade-offs where improvements in certain timesteps degrade others. To address these challenges, we introduce a meta-learning–based framework that learns sample-wise importance weights, promoting calibration samples that yield coherent and stable gradient directions across timesteps. %To address this, we propose a meta-learning–based approach that assigns sample-specific weights, allowing the model to prioritize samples that contribute more consistent and aligned gradients across timesteps.
This improves the overall quantization quality by guiding optimization in a more coherent direction.

 \section{Preliminary analysis}
%\subsection{Sample contribution}
%To investigate the influence of sample importance on quantization outcomes, we conduct a study using \blue{5,} calibration samples generated via the same procedure as TFMQ-DM~\citep{TFMQ-DM}. We construct 50 distinct weighting schemes for these samples—one uniform and the remaining 49 randomly generated. For each scheme, we apply the TFMQ-DM quantization algorithm~\citep{TFMQ-DM}, where the quantization loss is reweighted accordingly. We then assess the quantized model's performance on the CIFAR-10 dataset.
%\blue{Figure~\ref{fig:fid_vs_sfid}} reveals that a substantial portion of the non-uniform weighting schemes yield lower FID scores than the uniform baseline, indicating that uniform weighting is often suboptimal for quantization.

%\subsection{Gradient conflict across timesteps of quantized diffusion model}
To investigate the gradient dynamics induced by calibration samples from different timesteps during the quantization process, we analyze how gradients of the quantized model vary across timesteps. Specifically, we compute gradient vectors of the quantization loss evaluated on calibration samples drawn from different timesteps with respect to model parameters, using 256 samples per timestep for CIFAR10 dataset. We then measure the pairwise cosine distance between these gradient vectors to construct a gradient dissimilarity matrix. As shown in the heatmap in Figure~\ref{fig:gradient_matching}, the cosine distance between gradients varies across timesteps. In particular, while gradients from earlier timesteps exhibit higher consistency, those from later stages of the denoising process tend to diverge more noticeably. This observation indicates that calibration data from different timesteps induce distinct gradient signals. Ignoring these variations during quantization may result in gradient misalignment, which can hinder effective optimization and reduce generalization across timesteps, causing the model to perform well to certain timesteps while underperforming on others.
\begin{figure}[t]
  \centering
  \begin{subfigure}[b]{0.48\linewidth}
    \includegraphics[width=\linewidth]{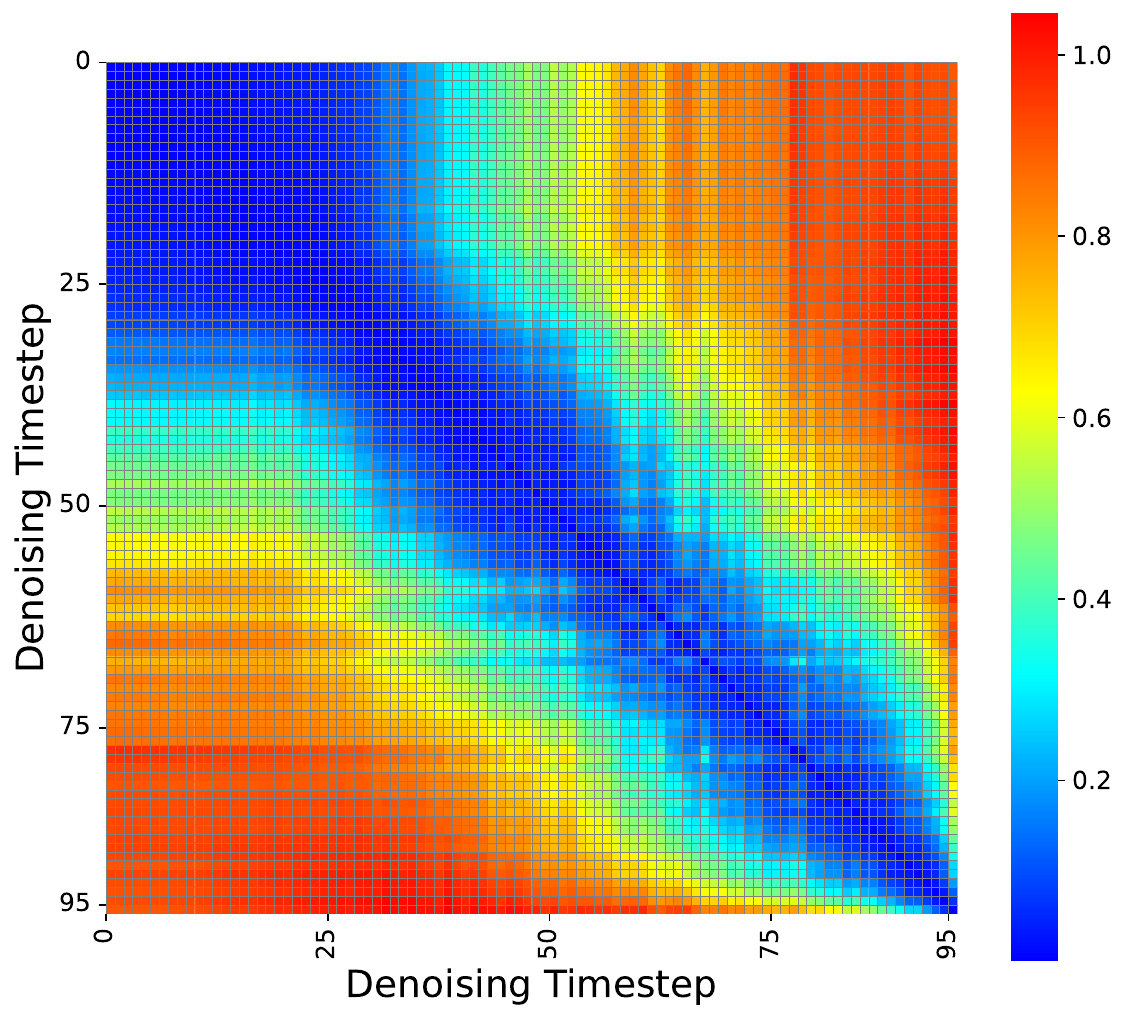}
    \caption{Gradient dissimilarity across timesteps.}
    \label{fig:gradient_matching}
  \end{subfigure}
  \hfill
  \begin{subfigure}[b]{0.48\linewidth}
    \includegraphics[width=\linewidth]{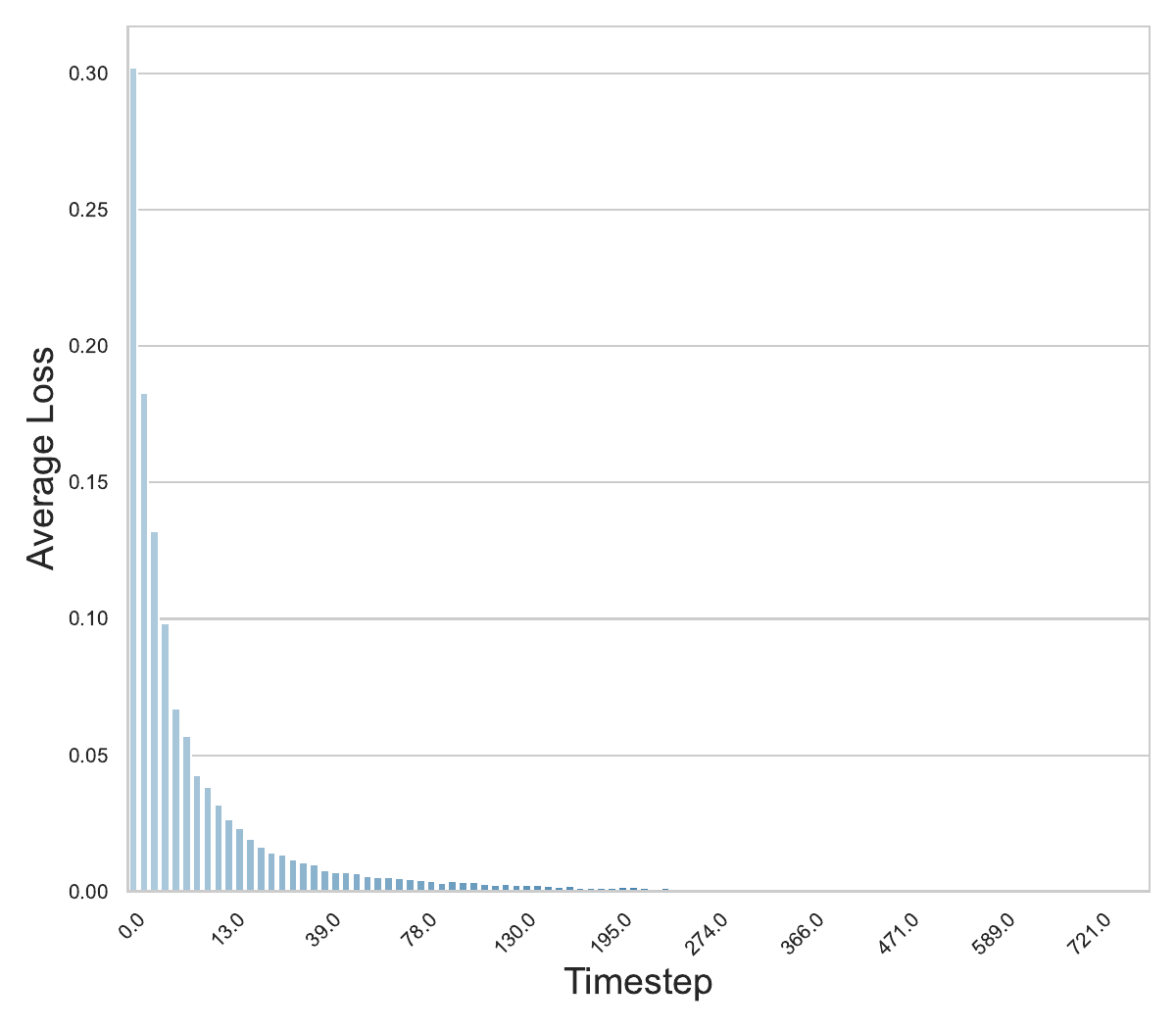}
    \caption{Timestep-wise loss of the quantized model}
    \label{fig:average_loss}
  \end{subfigure}
  \caption{Timestep-wise behavior in quantized diffusion models. (a) Gradient dissimilarity matrix constructed by computing pairwise cosine distance between gradient vectors of the quantization loss, with respect to model parameters, for calibration samples drawn from different timesteps. Higher value indicates higher divergent between timesteps. (b) Quantization loss evaluated separately for calibration samples grouped by timestep, highlighting the uneven performance of the quantized model across different timesteps. }
  \label{fig:main_figure}
\end{figure}

Additionally, we visualize the  loss of the quantized model after calibrated, using timestep-specific calibration subsets, each corresponding to a specific timestep. The results, shown in Figure~\ref{fig:average_loss}, reveal significant variation in loss across timesteps, indicating that the quantized model struggles to generalize across the full diffusion process.
 \section{Proposed method}
\title{Data optimization for Diffusion Model}
\subsection{Problem definition}
Before the quantization process starts, we %need to 
construct the calibration set following the procedure outlined in Q-Diffusion~\citep{Q-diffusion}, by selecting generated samples at fixed intervals across the denoising timesteps. Each calibration sample in the training set \( X^{(T)} = \bigcup_{t=1}^{\mathbf{T}} X_t^{(T)} \) is represented as \( (x^{(T)}_i, t^{(T)}_i) \in X_{t_i}^{(T)} \), where \( x^{(T)}_i  \) is a generated sample at the $t_i$ timestep. A validation set \( X^{(V)} = \bigcup_{t=1}^{\mathbf{T}} X_t^{(V)} \) is constructed by generating an equal number of synthetic samples $X_t^{(V)}$ from each timestep $t$. In our method, each calibration sample \( (x^{(T)}_i, t^{(T)}_i) \) is assigned a learnable weight \( \omega_i \), which reflects its influence on the quantized model’s performance. The complete set of weights for all training samples is denoted by \( \omega = \{\omega_i\}_{i=1}^{|X^{(T)}|} \). Our goal is to dynamically optimize training sample weights, such that the resulting quantized model $\theta^{*}_{Q}$ obtained after quantized using the weighted samples, achieves strong performance on the validation set. Given the full-precision model \( \theta_{\mathrm{FP}} \) and the initial quantized model \( \theta_{\mathrm{Q}} \), the optimization objective for \( \omega \) is defined as follows:

\begin{equation}
\small
\label{eq:overall_optimization}
\begin{aligned}[b]
    \omega &= \arg \min_{\omega} \mathcal{L}_{VAL}(\theta^{*}_Q(\omega),\theta_{FP},X^{(V)}) \\ %+ \mathcal{L}_{GM}(\theta^{*}_Q,x^{(V)}_{i}) \\
    \text{s.t: } \theta^{*}_Q(\omega) &= \theta_Q - \eta  \sum_{i=1}^{|X^{(T)}|} \omega_{i}\frac{\partial \mathcal{L}_{MSE}(\theta_Q,\theta_{FP},x^{(T)}_{i})}{\partial \theta_Q},
\end{aligned}
\end{equation}
where $|.|$ signifies the cardinality of a given set; $\mathcal{L}_{MSE}(.)$ denotes the quantization loss (MSE loss) that matches the outputs of the full-precision model $\theta_{FP}$ and the quantized model $\theta_Q$;
$\mathcal{L}_{VAL}(.)$ denotes  our optimization objective, to help the model achieves strong performance on the validation set. $\theta^{*}_Q(\omega)$ denotes the final quantized model after calibrated over the training set; $\eta$ denotes the learning rate. 
\subsection{%Data optimization for Diffusion model quantization
Calibration data optimization}
Beside model performance on the validation set, since all timesteps in diffusion model shares the same quantized model weight $\theta_{Q}$, for a consistent quantization process, we aim to align the gradients of $\theta^{*}_{Q}$  across different timestep-specific validation set %\blue{validation tasks $\to$ you introduce `tasks' here. If want to use `tasks', you should clearly present what is a `task'}
, promoting smoother optimization across subsequent quantization stages. In practice, we would divide timesteps into different groups, as adjacent timesteps often exhibit similar gradient behavior, as illustrated in Figure~\ref{fig:gradient_matching}. However, for simplicity, we assume each group consists of a single timestep and use $\mathbf{T}$ to denote both the number of timesteps and the number of groups throughout the algorithmic description.

%\textcolor{red}{explain why we need to learn $\omega$ here, to learn model xxx, to make all task in same direction, each timestep is a tasks}.

%In practice, we only approximate it with a single gradient descent step.
%\textcolor{red}{ideally, explain a story here, $L_val$ should have 2 components,...}.
%\textcolor{red}{then finally, we arrive to this optimization}

%\textcolor{red}{propose algorithm first, then propose theorem}
%Let $\theta^{*}_{Q,t}$ denotes the approximated final quantized model after optimizing $\omega$ using the validation set $X_t^{(V)}$ corresponding to the timestep $t$. The change in $\theta^{*}_Q$ can be estimated with: %If at the $t^{th}$ iteration,  Each gradient descent step evaluated on the validation data $X_t^{(V)}$ of a single timestep over the sample weights $\omega$ is equivalent to an update on the model weight by:
%\begin{equation}
%\small
%\label{eq:gradient_per_timestep}
%\begin{aligned}[b]
%    \mathcal{G}_{\theta,t} = \theta^{*}_{Q,t}- \theta^{*}_Q =  -\alpha\sum_{i=1}^{|X^{(T)}|} \frac{\partial \mathcal{L}_{VAL}(\theta^{*}_Q,\theta_{FP},X_t^{(V)})}  {\partial \omega_i} \odot\frac{\partial \mathcal{L}_{MSE}(\theta_Q,\theta_{FP},x^{(T)}_{t,i})}{\partial \theta_Q},
%\end{aligned}
%\end{equation}
%Therefore, the approximated final quantized model after calibrated with the whole validation set will be $\theta^{*}_Q + \sum_{t=1}^{T} \mathcal{G}_{\theta,t}$. 
Our validation loss $\mathcal{L}_{VAL}(.)$ is defined as:
\begin{equation}
\small
\label{eq:val_loss}
\begin{aligned}[b]
    \mathcal{L}_{VAL}(\theta^{*}_Q,\theta_{FP},X^{(V)}) =\mathcal{L}_{GM}(\theta^{*}_Q,X^{(V)}) + \mathcal{L}_{MSE}(\theta^{*}_Q,\theta_{FP},X^{(V)}) ,
\end{aligned}
\end{equation}

%Since all timesteps in diffusion model shares the same quantized model weight $\theta_{Q}$, for a consistent learning, we want to the updates across different $X_t^{(V)}$ to match each other as much as possible during the quantization stage by optimizing 
where the gradient matching loss $\mathcal{L}_{GM}$ for gradients w.r.t the model weights is defined as:
\begin{equation}
\label{eq:gm_loss}
\small
\begin{aligned}[b]
    \mathcal{L}_{GM}(\theta^{*}_Q,X^{(V)}) &= - \frac{2}{\mathbf{T}*(\mathbf{T}-1)}\sum_{t \neq k}\mathcal{G}_{\theta^{*}_Q,t}\mathcal{G}_{\theta^{*}_Q,k} \\
\text{s.t: } \mathcal{G}_{\theta^{*}_Q,t} &= \frac{\partial \mathcal{L}_{MSE}( \theta^{*}_Q,\theta_{FP},X_t^{(V)})}{\partial \theta^{*}_Q}
\end{aligned}
\end{equation}

%\begin{equation}
%\label{eq:gm_loss}
%\small
%\begin{aligned}[b]
%    \mathcal{L}^{(2)}_{GM}(\theta^{*}_Q,X_t^{(V)}) &= - \frac{2}{T*(T-1)}\sum_{t \neq k}\mathcal{G}_{\theta,t}\mathcal{G}_{\theta,k} \\
%\text{s.t: } \mathcal{G}_{i} &= \frac{\partial \mathcal{L}_{VAL}( \theta^{*}_Q,\theta_{FP},X_t^{(V)})}{\partial \omega}
%\end{aligned}
%\end{equation}

%\begin{equation}
%\label{eq:gm_loss}
%\small
%\begin{aligned}[b]
%    cos(\frac{\partial \mathcal{L}_{GM}(\theta^{*}_Q,X_t^{(V)})}{\partial \omega},\frac{\partial \mathcal{L}^{(2)}_{GM}(\theta^{*}_Q,X_t^{(V)})}{\partial \omega}) > 1-something \\
%\end{aligned}
%\end{equation}

%with $\alpha$ and $\beta$ are respectively hyper-parameters denoting the contribution of each loss.
%\textcolor{red}{add definition of theta_Q}

\vspace{-0.3cm}
\paragraph{Regarding the loss $\mathcal{L}_{MSE}$ in Eq. (\ref{eq:overall_optimization}).}
The reconstruction loss $\mathcal{L}_{MSE}$, commonly employed in prior quantization methods, is defined as follows:
\begin{equation}
\label{eq:reconstruction_loss_definition}
          \begin{aligned}[b]
                \mathcal{L}_{MSE}(\theta_Q,\theta_{FP},X^{(T)}) &= \frac{1}{|X^{(T)}|}\sum_{i=1}^{|X^{(T)}|} \mathcal{L}_{MSE}(\theta_Q,\theta_{FP},x_i) \\
                &=\frac{1}{|X^{(T)}|}\sum_{i=1}^{|X^{(T)}|}  \norm{\mathtt{f}(\theta_{FP},x_i)-\mathtt{f}(\theta_{Q},x_i)}^{2}, 
          \end{aligned}
    \end{equation}
where 
$\mathtt{f}(\theta_{FP},x_i)$ and $\mathtt{f}(\theta_{Q},x_i)$ respectively denote the outputs of the full-precision and quantized models  given the input sample $x_i$. 

Directly optimizing Eq. (\ref{eq:overall_optimization}) is challenging due to the involvement of the third-order term in the gradient of \(\mathcal{L}_{GM}(.)\) with respect to the sample weights \(\omega\). %To address this, we propose a more efficient algorithm in Algorithm ~\ref{alg:data_optimization}, and  prove that the proxy objective optimized by this algorithm serves as a faithful surrogate for the original loss in Eq.~\ref{eq:overall_optimization}. Theorem~\ref{theorem:gradient_matching} formalizes this equivalence, demonstrating that optimizing with Algorithm~\ref{alg:data_optimization} indirectly lead to the minimization of the original optimization problem in Eq.~\ref{eq:overall_optimization}.
To address this, we propose a more efficient algorithm in Algorithm~\ref{alg:data_optimization}, and prove that the proxy objective optimized by this algorithm serves as a faithful surrogate for the original loss in Eq.(~\ref{eq:overall_optimization}). Theorem~\ref{theorem:gradient_matching} formalizes this relationship, demonstrating that optimization via Algorithm~\ref{alg:data_optimization} induces the minimization of the original objective in Eq.~(\ref{eq:overall_optimization}). 
%\blue{Specifically, the Algorithm~\ref{alg:data_optimization} ...} 

\begin{theorem}
\label{theorem:gradient_matching}
The optimization in Algorithm ~\ref{alg:data_optimization}
implicitly lead to the minimization of the target objective \( \mathcal{L}_{\text{VAL}}(\cdot) \) in Eq. (\ref{eq:overall_optimization}).
\end{theorem}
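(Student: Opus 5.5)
The argument is a first-order (Taylor) analysis of what one step of Algorithm~\ref{alg:data_optimization} does to $\mathcal{L}_{VAL}$. The algorithm has not been shown, so I assume it does the following. It computes the one-step lookahead $\theta^{*}_Q(\omega)$ of Eq.~(\ref{eq:overall_optimization}). It evaluates the timestep-wise validation gradients $\mathcal{G}_{\theta^{*}_Q,t}$. It then updates $\omega$ with a cheap proxy: for each $i$, the proxy adds, to the MSE hypergradient, a term rewarding alignment between the per-sample training gradient $g_i=\partial\mathcal{L}_{MSE}(\theta_Q,\theta_{FP},x^{(T)}_i)/\partial\theta_Q$ and the timestep validation gradients $\mathcal{G}_{\theta^{*}_Q,t}$. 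In this way it avoids the third-order term.

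\textbf{Step 1 (MSE part).} By the chain rule on Eq.~(\ref{eq:overall_optimization}),
\[
\frac{\partial \mathcal{L}_{MSE}(\theta^{*}_Q(\omega),\theta_{FP},X^{(V)})}{\partial\omega_i} = -\eta\, g_i^{\top}\sum_t \mathcal{G}_{\theta^{*}_Q,t},
\]
with no approximation. So increasing $\omega_i$ for samples whose $g_i$ aligns with the aggregated validation gradient is exactly a descent step on the MSE term. Any update of the form $\Delta\omega_i\propto \eta\, g_i^{\top}\sum_t\mathcal{G}_{\theta^{*}_Q,t}$ therefore satisfies $\langle\nabla_\omega\mathcal{L}_{MSE},\Delta\omega\rangle\le 0$. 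The same argument, with $-\mathcal{L}_{MSE}$ in place of $\mathcal{L}_{MSE}$, gives the sign for the proxy's matching term.

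\textbf{Step 2 (gradient-matching part).} Differentiating $\mathcal{L}_{GM}$ gives
\[
\partial_{\omega_i}\mathcal{L}_{GM}=-\tfrac{2}{\mathbf{T}(\mathbf{T}-1)}\sum_{t\neq k}2\,\mathcal{G}_{k}^{\top}H_t\,\partial_{\omega_i}\theta^{*}_Q = \tfrac{4\eta}{\mathbf{T}(\mathbf{T}-1)}\sum_{t\neq k}\mathcal{G}_k^{\top}H_t\, g_i ,
\]
where $H_t$ is the Hessian of the timestep-$t$ validation loss at $\theta^{*}_Q$. The plan is to relate this to the proxy term. Write $\mathcal{G}_t(\theta^{*}_Q)\approx\mathcal{G}_t(\theta_Q)-\eta H_t\sum_j\omega_j g_j$ (first-order Taylor). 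Then $\sum_{t\neq k}\mathcal{G}_t^{\top}\mathcal{G}_k$ evaluated at the lookahead equals the same quantity at $\theta_Q$ minus exactly the cross terms $\eta\sum_j\omega_j\,\mathcal{G}_k^{\top}H_t g_j$, up to $O(\eta^2)$. Minimizing the algorithm's alignment objective (inner products between timestep gradients after the weighted update) thus agrees, to first order in $\eta$, with minimizing $\mathcal{L}_{GM}(\theta^{*}_Q(\omega))$. This is the standard Reptile/MAML-style argument: the Hessian–gradient product is implicitly realized by differencing gradients across the update.

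\textbf{Step 3 (combine).} Summing Steps 1 and 2 shows that the proxy gradient $\hat\nabla_\omega$ used by the algorithm satisfies
\[
\hat\nabla_\omega=\nabla_\omega\mathcal{L}_{VAL}+O(\eta^2).
\]
Hence $\langle\nabla_\omega\mathcal{L}_{VAL},\hat\nabla_\omega\rangle>0$ whenever $\eta$ is small and $\nabla_\omega\mathcal{L}_{VAL}\neq 0$. A descent lemma, assuming $L$-smoothness of $\mathcal{L}_{VAL}$ in $\omega$ and a small step size $\alpha$, then gives $\mathcal{L}_{VAL}(\omega-\alpha\hat\nabla_\omega)<\mathcal{L}_{VAL}(\omega)$. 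This is the sense of \emph{implicitly leads to minimization}.

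\textbf{Main obstacle.} The hard step is Step 2: controlling the remainder so that it is genuinely $O(\eta^2)$ relative to the leading term. This needs boundedness of $H_t$ and of third derivatives, which is delicate because the quantizer is non-differentiable. I would handle it by working with the straight-through/soft-rounding surrogate used in block reconstruction, where these quantities are smooth. I would then state the result under explicit smoothness assumptions, rather than for the discrete model itself.
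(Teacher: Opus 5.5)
There is a genuine gap, and it begins with the algorithm you analyze. Algorithm~\ref{alg:data_optimization} never adds an explicit alignment term to the $\omega$-update. At each inner iteration it samples a single timestep $t$, forms the lookahead $\theta^{*}_Q$, and takes a pseudo-step $\omega^{(i)}=\omega^{(i-1)}-\eta\nabla_{\omega}\mathcal{L}_{MSE}(\theta^{*}_Q,\theta_{FP},X^{(V)}_t)$, using only that timestep's MSE hypergradient. After $\mathbf{T}$ such steps it moves $\omega^{(0)}$ a fraction $\eta_\omega/\mathbf{T}$ of the way toward $\omega^{(T)}$, which is a Reptile-style outer step. Your $\hat\nabla_\omega$ is never specified, so the identity $\hat\nabla_\omega=\nabla_\omega\mathcal{L}_{VAL}+O(\eta^2)$ in Step 3 is asserted, not derived. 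Step 2 does not supply it: the ``alignment objective'' there is $\mathcal{L}_{GM}(\theta^{*}_Q(\omega))$ itself, so the claimed first-order agreement is circular. The proxy in your opening paragraph rewards alignment of $g_i$ with $\mathcal{G}_{\theta^{*}_Q,t}$, which by your own Step 1 is just a rescaled MSE hypergradient. It contains none of the cross-timestep Hessian terms $\mathcal{G}_k^{\top}H_t g_i$ that make up $\partial_{\omega_i}\mathcal{L}_{GM}$.

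The missing idea is the expansion behind Lemma~\ref{lem:1}. Taylor-expand each sequential gradient $\mathcal{G}^{(t)}_{\omega,t}$ around $\omega^{(0)}$. Random timestep sampling makes the iteration indices exchangeable, so $\mathbb{E}[\mathcal{H}_{\omega,j}\mathcal{G}^{(i)}_{\omega,1}]=\mathbb{E}[\mathcal{H}_{\omega,i}\mathcal{G}^{(j)}_{\omega,1}]$. Summing the steps then gives $\omega^{(T)}-\omega^{(0)}\approx-\eta\nabla_\omega(\mathcal{L}_{MSE}+\tfrac{\eta\mathbf{T}(\mathbf{T}-1)}{4}\mathcal{L}^{(2)}_{GM})$. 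The implicit matching term is therefore built from inner products of the $\omega$-gradients $\mathcal{G}_{\omega,t}=-\eta\,\mathcal{G}_{\theta^{*}_Q,t}^{\top}G$, with $G=[g_1,\dots,g_N]$. That is, it involves $\eta^2\mathcal{G}_{\theta^{*}_Q,t}^{\top}GG^{\top}\mathcal{G}_{\theta^{*}_Q,k}$ rather than $\mathcal{G}_{\theta^{*}_Q,t}^{\top}\mathcal{G}_{\theta^{*}_Q,k}$: it is weighted by $GG^{\top}$ and scales differently in $\eta$. So the algorithm's direction is generically not $\nabla_\omega\mathcal{L}_{VAL}+O(\eta^2)$, and a descent-lemma argument on $\mathcal{L}_{VAL}$ cannot be closed as you set it up.

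The paper therefore needs a second, separate ingredient, Lemma~\ref{lem:2}, which transfers alignment from $\omega$-space to $\theta$-space. It uses the chain-rule identity for $\mathcal{G}_{\omega,t}$ and a small-$\eta$ contradiction on the ratios of $C_{t,n}=\cos(\mathcal{G}_{\theta^{*}_Q,t},g_n)$. This shows that $\cos(\mathcal{G}_{\omega,i},\mathcal{G}_{\omega,j})=1$ forces $\cos(\mathcal{G}_{\theta^{*}_Q,i},\mathcal{G}_{\theta^{*}_Q,j})=1$. The result is only a minimizer correspondence, which is the sense of ``implicitly'' in the theorem. The real obstacle is this $\omega$-to-$\theta$ transfer, not the quantizer smoothness you flag.
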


In order to prove our main result, we present two lemmas that will be instrumental in the proof of the theorem.

\begin{lemma}
\label{lem:2}
Let us denotes $\mathcal{G}_{\omega,t}  = \frac{\partial \mathcal{L}_{MSE}( \theta^{*}_Q,\theta_{FP},X_t^{(V)})}{\partial \omega}$. The second gradient matching loss $\mathcal{L}^{(2)}_{GM}(.)$ for gradients w.r.t the sample weight $\omega$  is defined as:

\begin{equation}
\small
\label{eq:gm_loss_2}
\begin{aligned}[b]
    \mathcal{L}^{(2)}_{GM}(\theta^{*}_Q,X^{(V)}) = - \frac{2}{T*(T-1)}\sum_{t \neq k}\mathcal{G}_{\omega,t}\mathcal{G}_{\omega,k},
\end{aligned}
\end{equation}

The minimization of $\mathcal{L}^{(2)}_{GM}(.)$ will implicitly lead to the minimization of $\mathcal{L}_{GM}(.)$, in the sense that a minimizer of  \( \mathcal{L}^{(2)}_{\text{GM}} \) corresponds to a minimizer of the target loss \( \mathcal{L}_{\text{GM}} \).
\end{lemma}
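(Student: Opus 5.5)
The key is the chain rule through the one-step unrolled update in Eq.~(\ref{eq:overall_optimization}). Since $\theta^{*}_Q(\omega) = \theta_Q - \eta \sum_i \omega_i \nabla_{\theta_Q}\mathcal{L}_{MSE}(\theta_Q,\theta_{FP},x^{(T)}_i)$, the Jacobian $J := \partial \theta^{*}_Q/\partial \omega$ is the matrix whose $i$-th column is $-\eta\, g_i$, with $g_i := \nabla_{\theta_Q}\mathcal{L}_{MSE}(\theta_Q,\theta_{FP},x^{(T)}_i)$. This Jacobian does not depend on $\omega$ and is shared by all timesteps. Hence
\[
\mathcal{G}_{\omega,t} = J^{\top}\mathcal{G}_{\theta^{*}_Q,t}, \qquad
\mathcal{G}_{\omega,t}^{\top}\mathcal{G}_{\omega,k} = \mathcal{G}_{\theta^{*}_Q,t}^{\top}\, J J^{\top}\, \mathcal{G}_{\theta^{*}_Q,k} = \eta^{2}\, \mathcal{G}_{\theta^{*}_Q,t}^{\top} \Big(\sum_i g_i g_i^{\top}\Big) \mathcal{G}_{\theta^{*}_Q,k}.
\]
So $\mathcal{L}^{(2)}_{GM}$ is the same pairwise inner-product objective as $\mathcal{L}_{GM}$ in Eq.~(\ref{eq:gm_loss}), but in the metric $M := \eta^2 \sum_i g_i g_i^{\top}$ (a Gauss--Newton/Fisher-type matrix) instead of the Euclidean one.

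Next I would relate the two quadratic forms. $M$ is positive semidefinite. In the regime where the calibration gradients $\{g_i\}$ span the relevant directions, $M$ is positive definite, with eigenvalues $0<\lambda_{\min}\le\lambda_{\max}$. Write $\mathcal{G}_t$ for $\mathcal{G}_{\theta^{*}_Q,t}$. I would use the polarization identity
\[
\sum_{t\neq k} \mathcal{G}_t^{\top} A\, \mathcal{G}_k = \Big\|\sum_t \mathcal{G}_t\Big\|_A^2 - \sum_t \|\mathcal{G}_t\|_A^2,
\]
which holds for $A=I$ and $A=M$. Minimizing the negative pairwise sum in either metric amounts to making the per-timestep gradients point in a common direction. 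In the idealized case where gradient norms are held fixed, the minimum is attained exactly when all $\mathcal{G}_t$ are collinear and equally oriented, and this characterization does not depend on $A$. Thus $\mathcal{G}_t = c_t u$ with $c_t>0$ is a minimizer configuration of $\mathcal{L}^{(2)}_{GM}$ if and only if it is one for $\mathcal{L}_{GM}$. This gives the claimed correspondence of minimizers. Where no exact alignment is reachable, I would add a sandwich bound: $\lambda_{\min}\|\cdot\|^2 \le \|\cdot\|_M^2 \le \lambda_{\max}\|\cdot\|^2$, applied to both terms of the polarization identity. This bounds $\mathcal{L}_{GM}$ above in terms of $\mathcal{L}^{(2)}_{GM}$ plus norm terms, so driving $\mathcal{L}^{(2)}_{GM}$ down also drives $\mathcal{L}_{GM}$ down.

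The main obstacle is the degenerate case where $M$ is singular, i.e.\ the number of calibration samples is smaller than the parameter dimension. Then $\mathcal{L}^{(2)}_{GM}$ only sees the projections of the $\mathcal{G}_t$ onto $\mathrm{span}\{g_i\}$, and alignment there does not imply full alignment. I would handle this by stating the lemma relative to the component of $\theta^{*}_Q$ that $\omega$ can actually influence. Every $\theta^{*}_Q(\omega)$ lies in $\theta_Q + \mathrm{span}\{g_i\}$, so only gradient components in that span affect the reachable update directions. On that span $M$ is positive definite, and the argument above applies verbatim, with the orthogonal complement treated as a constant with respect to $\omega$. I would make this assumption explicit, together with treating the per-timestep gradient norms as fixed when identifying minimizers, since these two points carry the weight of the ``in the sense that'' qualifier in the lemma.
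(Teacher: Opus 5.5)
\textbf{Verdict: genuine gap.} Your proposal proves the lemma only when $M$ is positive definite, which essentially never happens here. The case that matters needs an idea you do not use.

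\textbf{What is correct.} Your chain-rule step and the rewriting $\mathcal{G}_{\omega,t}^{\top}\mathcal{G}_{\omega,k}=\mathcal{G}_{\theta^{*}_Q,t}^{\top}M\,\mathcal{G}_{\theta^{*}_Q,k}$ with $M=JJ^{\top}$ are right. If $M$ is positive definite, the conclusion is immediate. $J^{\top}$ is then injective, so $J^{\top}\mathcal{G}_{\theta^{*}_Q,1}=c\,J^{\top}\mathcal{G}_{\theta^{*}_Q,2}$ with $c>0$ forces $\mathcal{G}_{\theta^{*}_Q,1}=c\,\mathcal{G}_{\theta^{*}_Q,2}$.

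\textbf{Where it fails.} $\operatorname{rank}M\le N=|X^{(T)}|$, which is typically far smaller than the number of quantization parameters. So the singular case is not a corner case; it is the whole content of the lemma, and there your proposal does not prove the stated claim. Take any nonzero $v\in\ker J^{\top}$ with $\mathcal{G}_{\theta^{*}_Q,1}-c\,\mathcal{G}_{\theta^{*}_Q,2}=v$. Then the $\omega$-gradients are perfectly aligned while the $\theta^{*}_Q$-gradients are not. Hence no argument using only the chain-rule structure can succeed.

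Your fix does not repair this, for two reasons.
\begin{enumerate}
\item Restricting to $\mathrm{span}\{g_i\}$ proves a different, weaker statement. $\mathcal{L}_{GM}$ uses the full Euclidean inner product, so it contains $\sum_{t\neq k}(P_{\perp}\mathcal{G}_{\theta^{*}_Q,t})^{\top}(P_{\perp}\mathcal{G}_{\theta^{*}_Q,k})$, where $P_{\perp}$ is the projector onto $\mathrm{span}\{g_i\}^{\perp}$.
\item The claim that this orthogonal component is constant in $\omega$ is false at first order in $\eta$. Moving $\theta^{*}_Q$ inside $\theta_Q+\mathrm{span}\{g_i\}$ shifts $P_{\perp}\mathcal{G}_{\theta^{*}_Q,t}(\omega)$ by $-\eta P_{\perp}\mathcal{H}^{(V)}_t\sum_i\omega_i g_i$ relative to its value at $\theta_Q$. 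Here $\mathcal{H}^{(V)}_t$ is the Hessian of the timestep-$t$ validation loss, and this shift varies with $\omega$ whenever the Hessian couples the span to its complement.
\end{enumerate}

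\textbf{The missing idea.} The paper's proof rests on showing that the normalized validation gradients $\hat{\mathcal{G}}_t=\mathcal{G}_{\theta^{*}_Q,t}/\norm{\mathcal{G}_{\theta^{*}_Q,t}}$ lie in $\mathrm{span}\{g_k\}$ up to $O(\eta)$. Then $\ker J^{\top}$ can only absorb an $O(\eta)$ misalignment. The paper gets this from two ingredients absent from your proposal.
\begin{enumerate}
\item Because $X^{(V)}_t$ and $X^{(T)}_t$ come from the same timestep, the direction of $\mathcal{G}_{\theta^{*}_Q,t}$ is identified with that of the training gradient $\mathcal{G}^{(T)}_{\theta^{*}_Q,t}$.
\item A first-order Taylor expansion around $\theta_Q$ uses $\theta^{*}_Q-\theta_Q=-\eta\sum_i\omega_i g_i$ and $\sum_i\omega_i=1$. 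It gives $\mathcal{G}^{(T)}_{\theta^{*}_Q,t}=\mathcal{G}^{(T)}_{\theta_Q,t}+e$ with $\norm{e}\le\eta\norm{\mathcal{H}_t}_2\max_i\norm{g_i}$. Here $\mathcal{H}_t$ is the Hessian of the timestep-$t$ training loss at $\theta_Q$, and $\mathcal{G}^{(T)}_{\theta_Q,t}$ is an average of the $g_k$ over $X^{(T)}_t$, so it lies in the span.
\end{enumerate}

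The paper then argues by contradiction. Suppose $\cos(\mathcal{G}_{\theta^{*}_Q,1},\mathcal{G}_{\theta^{*}_Q,2})=\gamma<1$, and let $B=\hat{\mathcal{G}}_1-\hat{\mathcal{G}}_2$. Then $\hat{\mathcal{G}}_1^{\top}B=1-\gamma$, so $(\mathcal{G}^{(T)}_{\theta_Q,1})^{\top}B>0$ once $\eta$ is below an explicit threshold. Thus $B$ pairs nontrivially with some $g_k$. The paper claims this contradicts aligned $\omega$-gradients, which force the rows of $C_{t,k}=\cos(\mathcal{G}_{\theta^{*}_Q,t},g_k)$ to be positively proportional.

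\textbf{How to close it in your notation.} Write $\hat{\mathcal{G}}_t=s_t+e_t$ with $s_t\in\mathrm{span}\{g_k\}$ and $\norm{e_t}=O(\eta)$. Aligned $\omega$-gradients mean $B'=\hat{\mathcal{G}}_1-\rho\,\hat{\mathcal{G}}_2\in\ker J^{\top}$ for some $\rho>0$. Since $B'$ is orthogonal to each $s_t$, we get $\norm{B'}^2=B'^{\top}(e_1-\rho\,e_2)$. Combined with $\rho\le1+\norm{B'}$, this gives $\norm{B'}=O(\eta)$. That is alignment up to $O(\eta)$, which is the precise content of the paper's ``sufficiently small $\eta$'' conclusion.

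With this step and the same-distribution assumption it requires, you no longer need the norm-fixing device or the eigenvalue sandwich. The paper's notion of a minimizer is simply pairwise cosine equal to one.

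Your norm-fixing step is also not metric-independent as stated. With Euclidean norms held fixed, the $M$-weighted objective is minimized only by alignment along the top eigenvector of $M$, not by every collinear configuration.
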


%we optimize \(\omega\) using a proxy objective $\mathcal{L}^{(2)}_{GM}$ as described in Lemma~\ref{lem:2}. We prove that minimizing the proxy loss \(\mathcal{L}^{(2)}_{GM}(\omega)\) aligns with minimizing the original gradient matching loss \(\mathcal{L}_{GM}\) in the sense that any global minimizer of \(\mathcal{L}^{(2)}_{GM}\) is also a global minimizer of \(\mathcal{L}_{GM}\). 

%We firstly  use Lemma~\ref{lem:2} to prove that minimizing $\mathcal{L}^{(2)}_{GM}$ in Eq.\ref{eq:gm_loss_2} implies the minimization of the original gradient matching loss $\mathcal{L}_{GM}$ in Eq.\ref{eq:gm_loss}. Therefore, minimizing $\mathcal{L}^{(2)}_{VAL}$ in Eq.\ref{eq:val_loss_2} leads to a reduction in the target validation objective $\mathcal{L}_{VAL}$ in Eq.\ref{eq:val_loss}. For the Theorem ~\ref{theorem:gradient_matching} to hold, we just need to prove that Algorithm \ref{alg:final_algorithm} will minimize $\mathcal{L}^{(2)}_{VAL}$ in Eq.\ref{eq:val_loss_2}, so that it will indirectly minimize $\mathcal{L}^{(2)}_{VAL}$ in Eq.\ref{eq:val_loss}, which is shown in Lemma ~\ref{lem:1}:
We begin by leveraging Lemma~\ref{lem:2} to show that minimizing the surrogate gradient matching loss $\mathcal{L}^{(2)}_{GM}(.)$ in Eq. (\ref{eq:gm_loss_2} )implies the minimization of the original gradient matching loss $\mathcal{L}_{GM}(.)$ in Eq. (\ref{eq:gm_loss}). Therefore, minimizing the surrogate validation objective $\mathcal{L}^{(2)}_{VAL}(.)$ in Eq. (\ref{eq:val_loss_2}) leads to the minimization of the true validation loss $\mathcal{L}_{VAL}(.)$ in Eq. (\ref{eq:val_loss}). To establish Theorem~\ref{theorem:gradient_matching}, it thus suffices to show that Algorithm~\ref{alg:data_optimization} minimizes $\mathcal{L}^{(2)}_{VAL}(.)$, which is stated by Lemma~\ref{lem:1}:

\begin{lemma}
\label{lem:1}
Let us define a second validation loss:
\begin{equation}
\small
\label{eq:val_loss_2}
\begin{aligned}[b]
    \mathcal{L}^{(2)}_{VAL}(\theta^{*}_Q,\theta_{FP},X^{(V)}) = \mathcal{L}^{(2)}_{GM}(\theta^{*}_Q,X^{(V)}) + \mathcal{L}_{MSE}(\theta^{*}_Q,\theta_{FP},X^{(V)}),
\end{aligned}
\end{equation}
The Algorithm \ref{alg:data_optimization} will  minimize $\mathcal{L}^{(2)}_{VAL}(\theta^{*}_Q,\theta_{FP},X^{(V)})$ in Eq. (\ref{eq:val_loss_2}).
\end{lemma}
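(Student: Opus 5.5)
We have not yet seen Algorithm~\ref{alg:data_optimization}. We assume it updates the sample weights $\omega$ by a descent step along a combination of two quantities: the meta-gradient of $\mathcal{L}_{MSE}(\theta^{*}_Q(\omega),\theta_{FP},X^{(V)})$ obtained through the one-step unrolled update in Eq.~(\ref{eq:overall_optimization}), and a term that aligns the per-timestep meta-gradients $\mathcal{G}_{\omega,t}$. The plan is to show that the update direction used by the algorithm equals $\partial \mathcal{L}^{(2)}_{VAL}/\partial\omega$, or at least is positively correlated with it. Standard descent arguments then give a decrease of $\mathcal{L}^{(2)}_{VAL}$ for a small enough step size.

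\textbf{Step 1: compute the meta-gradients explicitly.} By the chain rule through $\theta^{*}_Q(\omega)=\theta_Q-\eta\sum_i\omega_i g_i$, where $g_i=\partial\mathcal{L}_{MSE}(\theta_Q,\theta_{FP},x^{(T)}_i)/\partial\theta_Q$, we obtain
\begin{equation*}
\mathcal{G}_{\omega,t}[i] = -\eta\, g_i^{\top}\mathcal{G}_{\theta^{*}_Q,t}.
\end{equation*}
In matrix form this reads $\mathcal{G}_{\omega,t}=-\eta\,G^{\top}\mathcal{G}_{\theta^{*}_Q,t}$, with $G=[g_1,\dots,g_{|X^{(T)}|}]$. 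Summing over $t$ gives $\partial\mathcal{L}_{MSE}(\theta^{*}_Q,\theta_{FP},X^{(V)})/\partial\omega=\sum_t\mathcal{G}_{\omega,t}$ (up to the averaging constant). This identity is what lets us evaluate everything from per-timestep validation gradients and per-sample training gradients, with no third-order terms.

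\textbf{Step 2: differentiate the alignment term.} We differentiate $\mathcal{L}^{(2)}_{GM}=-\frac{2}{T(T-1)}\sum_{t\neq k}\mathcal{G}_{\omega,t}^{\top}\mathcal{G}_{\omega,k}$ with respect to $\omega$. Each factor depends on $\omega$ only through $\theta^{*}_Q$, so the derivative is
\begin{equation*}
-\frac{2}{T(T-1)}\sum_{t\neq k}\Bigl(\frac{\partial\mathcal{G}_{\omega,t}}{\partial\omega}\Bigr)^{\top}\mathcal{G}_{\omega,k}+(t\leftrightarrow k).
\end{equation*}
The Jacobians here are $-\eta\,G^{\top}H_t(-\eta G)$, where $H_t$ is the Hessian of the timestep-$t$ validation MSE at $\theta^{*}_Q$. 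We then match this expression to the algorithm's update. If the algorithm uses a first-order or finite-difference surrogate (as in MAML-style alignment), we show it equals this expression up to $O(\eta^2)$. A natural route is a Taylor expansion of the validation loss after a sequential inner step on $X_t^{(V)}$ followed by $X_k^{(V)}$. The cross term $-\eta\,\mathcal{G}_{\omega,t}^{\top}\mathcal{G}_{\omega,k}$ then appears automatically, which is the Reptile/Nichol-type argument.

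\textbf{Step 3: conclude descent.} With the algorithm's direction $d$ satisfying $d=\nabla_\omega\mathcal{L}^{(2)}_{VAL}+O(\eta^2)$, we apply a first-order Taylor expansion:
\begin{equation*}
\mathcal{L}^{(2)}_{VAL}(\omega-\alpha d)=\mathcal{L}^{(2)}_{VAL}(\omega)-\alpha\|\nabla\|^2+O(\alpha\eta^2)+O(\alpha^2).
\end{equation*}
This gives strict decrease for small $\alpha,\eta$ away from stationary points, and hence convergence to a stationary point under an $L$-smoothness assumption on $\mathcal{L}_{MSE}$.

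\textbf{Main obstacle.} The hard step is Step 2: showing that the cheap update in the algorithm really is, to leading order, the gradient of the cross-timestep inner-product term. This requires controlling the Hessian-vector terms, since the neglected ones are $O(\eta^2)$ and need bounded third derivatives. I would first try the Taylor expansion of the sequential-step loss, because it recovers the alignment term without ever forming Hessians explicitly.
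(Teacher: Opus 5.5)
\textbf{Gap: the proposal misreads Algorithm~\ref{alg:data_optimization} and omits its one nontrivial step.} Your fallback in Step~2, a Reptile/Nichol-type expansion over sequential inner steps, is the route the paper takes.

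The algorithm has no explicit alignment term to ``match''. It only takes first-order pseudo-steps $\omega^{(i)}=\omega^{(i-1)}-\eta\,\nabla_{\omega^{(i-1)}}\mathcal{L}_{MSE}$ on the validation group of a randomly sampled timestep. It then moves $\omega$ along $\omega^{(\mathbf{T})}-\omega^{(0)}=-\eta\sum_{t}\mathcal{G}^{(t)}_{\omega,t}$, so the alignment gradient has to be extracted from this displacement.

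The mechanism you describe is the MAML expansion: Taylor-expanding the loss on $X_k^{(V)}$ at $\omega-\eta\,\mathcal{G}_{\omega,t}$ so that $-\eta\,\mathcal{G}_{\omega,t}^{\top}\mathcal{G}_{\omega,k}$ ``appears automatically''. Its gradient requires differentiating through the inner step, which the algorithm never does. Expanding the gradient the algorithm actually uses around $\omega^{(0)}$ gives $\mathcal{G}^{(t)}_{\omega,t}\approx\mathcal{G}^{(t)}_{\omega,1}-\eta\,\mathcal{H}_{\omega,t}\sum_{i<t}\mathcal{G}^{(i)}_{\omega,1}$. The one-sided product $\mathcal{H}_{\omega,t}\mathcal{G}^{(i)}_{\omega,1}$ is not the gradient of any scalar, since $\nabla_\omega(\mathcal{G}^{(i)\top}_{\omega,1}\mathcal{G}^{(t)}_{\omega,1})=\mathcal{H}_{\omega,t}\mathcal{G}^{(i)}_{\omega,1}+\mathcal{H}_{\omega,i}\mathcal{G}^{(t)}_{\omega,1}$.

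The missing idea is symmetrization. The timestep at each iteration is drawn at random, so iteration indices are exchangeable and $\mathbb{E}[\mathcal{H}_{\omega,j}\mathcal{G}^{(i)}_{\omega,1}]=\mathbb{E}[\mathcal{H}_{\omega,i}\mathcal{G}^{(j)}_{\omega,1}]=\tfrac12\,\mathbb{E}[\nabla_\omega(\mathcal{G}^{(i)\top}_{\omega,1}\mathcal{G}^{(j)}_{\omega,1})]$. Only with this does the displacement become, in expectation, a gradient of the cross-timestep inner-product term. The obstacle you flag (bounded third derivatives) only controls the remainder. The exchangeability argument is the real content, and it makes the conclusion an in-expectation statement.

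\textbf{Step~3 bookkeeping.} Summing the expansion, the paper obtains
$\omega^{(\mathbf{T})}-\omega^{(0)}\approx-\eta\,(\nabla_\omega\mathcal{L}_{MSE}+\tfrac{\eta\mathbf{T}(\mathbf{T}-1)}{4}\nabla_\omega\mathcal{L}^{(2)}_{GM})$.
So the alignment term enters only as a first-order correction in $\eta$, with weight $\lambda=\eta\mathbf{T}(\mathbf{T}-1)/4$ rather than $1$. Writing $d=\nabla_\omega\mathcal{L}^{(2)}_{VAL}+O(\eta^2)$ therefore misstates the direction, because the term you care about does not carry the weight you assign it.

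What you can legitimately show is descent for $\mathcal{L}_{MSE}+\lambda\,\mathcal{L}^{(2)}_{GM}$, with effective step size $\eta\,\eta_\omega/\mathbf{T}$ from the outer update. The paper identifies this composite with $\mathcal{L}^{(2)}_{VAL}$ only loosely, and your descent argument should be stated for the weighted composite. The explicit chain-rule Jacobians in Steps~1--2 are not needed for this lemma; the paper works directly with $\mathcal{G}_{\omega,t}$ and $\mathcal{H}_{\omega,t}$.
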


%From Lemma~\ref{lem:2}, minimizing $\mathcal{L}^{(2)}_{GM}$ implies the minimization of the original gradient matching loss $\mathcal{L}_{GM}$. Consequently, minimizing $\mathcal{L}^{(2)}_{VAL}$ leads to a reduction in the target validation objective $\mathcal{L}_{VAL}$.

%Furthermore, by Lemma~\ref{lem:1}, the optimization procedure described in Theorem~\ref{theorem:gradient_matching} minimizes $\mathcal{L}^{(2)}_{VAL}$.

Therefore, combining Lemma~\ref{lem:1} and Lemma~\ref{lem:2}, we conclude that Theorem~\ref{theorem:gradient_matching} holds. 
Please see the Supplementary for the proof of our Lemmas \ref{lem:1} and \ref{lem:2}.

Based on Theorem ~\ref{theorem:gradient_matching}, we can optimize the sample weights in Eq. (\ref{eq:overall_optimization}) implicitly using the Algorithm \ref{alg:data_optimization} in the Appendix.

\subsection{Overall optimization framework}
At the beginning of training, a synthetic dataset is constructed by sampling from the full-precision diffusion model across multiple timesteps. This set is then divided into a timestep-balanced validation set and a training set. The training sample weight $\omega$ are initialized uniformly as 
\begin{equation}
\label{eq:sample_weight_definition}
    \begin{aligned}[b]
        \omega_i = \frac{\exp(s_i / \tau)}{\sum_j \exp(s_j / \tau)},
    \end{aligned}
\end{equation}
where we initialize $s_i = \frac{1}{32} \quad \forall \quad 0 \leq i < |X^{(T)}|$, $\tau$ denotes the temperature hyper-parameter. During the training process, model calibration is performed in a block-wise fashion, with sample weights updated at each transition to a new block. A summary of the proposed model weight calibration procedure is provided in Algorithm~\ref{alg:final_algorithm}.
\begin{algorithm}[h]
\caption{Diffusion Model Quantization with Sample Weights}
\label{alg:final_algorithm}
\begin{algorithmic}[1]
\STATE \textbf{Input:} Full-precision model \( \theta_{FP} \); number of layers \( L \); number of timesteps \( \mathbf{T} \); the training set \( X^{(T)} \); the validation set \( X^{(V)} \)  
%\STATE Generate synthetic training set \( X^{(T)} \)
%\STATE Extract validation set \( X^{(V)} \subset X^{(T)} \)
\STATE Initialize sample weights \( \omega \)
\STATE Initialize quantized model \( \theta_Q \gets \theta_{FP} \)

\FOR{\( \ell = 1 \) to \( L \)}
    %\STATE Freeze all layers except layer \( \ell \)
    \STATE Use Algorithm~\ref{alg:data_optimization} to update \( \omega \) \\
    \STATE Update quantized model \( \theta_Q \) by calibrating layer \( \ell \) with training set \( X^{(T)} \) and the updated sample weights $\omega$. \hfill 
\ENDFOR

\STATE \textbf{Return:} Quantized model \( \theta_Q \)
\end{algorithmic}
\end{algorithm}

 \section{Experiments}

\subsection{Experimental setup}
\noindent\textbf{Models and datasets.} To assess the effectiveness of our approach, we conduct experiments on popular diffusion architectures. Specifically, we provide performance on DDPM~\citep{ho2020denoising}, for unconditional generation, and LDM~\citep{rombach2022high}, which utilizes latent space and supports both unconditional and class-conditional generation tasks.

Our evaluation spans multiple standard datasets, including CIFAR-10 at a resolution of $32 \times 32$\citep{krizhevsky2010cifar}, LSUN-Bedrooms at $256 \times 256$\citep{lsun}, and ImageNet at $256 \times 256$~\citep{deng2009imagenet}.

\noindent\textbf{Implementation details.} 
Our approach aligns with the current state-of-the-art techniques in post-training quantization (PTQ) applied to diffusion models~\citep{PTQ4DM,TFMQ-DM}, targeting both model weights and activations.
In practical scenarios, post-training quantization (PTQ) for diffusion models typically addresses weight and activation quantization as separate processes. For activations, TFMQ-DM~\citep{TFMQ-DM} has shown that adopting sophisticated quantization techniques tends to introduce high computational cost while yielding only limited performance gains. As a result, we employ the lightweight activation quantization method from TFMQ-DM, which estimates activation ranges using an exponential moving average (EMA)\citep{jacob2018quantization} over mini-batches. On the other hand, we quantize the model weights using the AdaRound algorithm \citep{AdaRound}, in conjunction with block-wise reconstruction \citep{BRECQ}, to efficiently quantize the noise-estimation network. We adopt AdaRound as it is the standard weight-quantization scheme used in prior diffusion PTQ methods \citep{TFMQ-DM,PTQD}, ensuring a fair and consistent comparison.
We begin by generating the calibration data through inference on the pretrained full-precision diffusion model, as outlined in Q-Diffusion~\citep{Q-diffusion}. To ensure consistency with prior post-training quantization efforts~\citep{PTQ4DM,Q-diffusion,TFMQ-DM}, we adopt the LAPQ method~\citep{nahshan2021loss} to initialize the quantized model parameters $\theta_Q$ using weights from the original full-precision network. Additionally, we integrate the temporal feature preservation strategy proposed in TFMQ-DM~\citep{TFMQ-DM} to better maintain generative quality. Regarding the validation set, we use a small subset of the generated data as the validation set.  Since our algorithm aligns gradients across timesteps and adjacent timesteps tend to exhibit similar behavior, we partition the validation data into 5 groups, each corresponding to a consecutive range of timesteps. Each group is treated as a separate task in our algorithm. We optimize the sample weights using the Adam optimizer with a learning rate of $5 \times 10^{-6}$ for 1500 iterations per update.
For each quantization block, we apply $20000$ optimization iterations, in line with existing approaches~\citep{PTQ4DM,Q-diffusion,TFMQ-DM}. Optimization of the sample weights $\omega$ is carried out using the Adam optimizer~\citep{Kingma2014AdamAM}, with a fixed learning rate of $4 \times 10^{-5}$. We employ the \textit{higher} library\footnote{https://github.com/facebookresearch/higher} to enable gradient-based meta-optimization.  
%Additional implementation details are provided in the Appendix.

\vspace{1em}
\noindent\textbf{Evaluation metrics.} To ensure alignment with established benchmarks~\citep{PTQ4DM,Q-diffusion,TFMQ-DM}, we assess the generative quality of diffusion models using both the Fréchet Inception Distance (FID)\citep{heusel2018gans} and spatial Fréchet Inception Distance (sFID)\citep{salimans2016improved}. These metrics provide complementary insights into visual fidelity and spatial coherence. Specifically, FID evaluates the discrepancy between real and generated image distributions by comparing their high-level Inception features, while sFID focuses on mid-level features to better capture localized structural patterns in the images. Following common practice, we generate and evaluate 50,000 synthetic samples to compute these scores, for a fair comparison with prior quantization studies.

\begin{table}[h]\setlength{\tabcolsep}{10pt}
  \centering
  \caption{Quantization results for unconditional image generation with DDIM on CIFAR-10 $32 \times 32$.}
  \resizebox{0.8\linewidth}{!}{
    \begin{tabular}{lcccccc}
      \toprule
      \multirow{2}{*}{\textbf{Methods}} &  \multicolumn{6}{c}{\textbf{CIFAR-10 $32 \times 32$}}\\
      \cmidrule(r){2-7}
      % & & \multicolumn{2}{c}{4/32} & \multicolumn{2}{c}{4/8} & \multicolumn{2}{c}{8/8} \\
      % \cmidrule(r){3-4} \cmidrule(r){5-6} \cmidrule(r){7-8}
      & \textbf{W/A} & FID$\downarrow$ & sFID$\downarrow$ & \textbf{W/A} & FID$\downarrow$ & sFID$\downarrow$  \\
       \midrule
    %   Full Prec. & 32/32 & 4.23 & 4.41 & 4.23 & 4.41 & 4.23 & 4.41 & 4.23 & 4.41 \\
     %  \midrule
       PTQ4DM~\citep{PTQ4DM} & \multirow{4}{*}{4/32} & 5.65 & - &\multirow{4}{*}{4/8} & 5.14 & -  \\
      Q-Diffusion~\citep{Q-diffusion} &  & 5.08 & 4.98 & & 4.98 & 5.68 \\
      TFMQ-DM~\citep{TFMQ-DM} &  & 4.73 & - &  &4.78 & -  \\
      \rowcolor[gray]{0.9}Ours &  & \textbf{4.28} & \textbf{4.56} & & \textbf{4.32} & \textbf{4.61} \\
      \bottomrule
    \end{tabular}
  }
  \label{tab:sota_ddimv1}
\end{table}

\begin{table}[h]
  \centering
  \small
  \caption{Quantization results for image generation with LDM-4 on LSUN-Bedrooms and ImageNet at resolution $256 \times 256$. We report FID, sFID, Precision, and Recall for each dataset.}
  \setlength{\tabcolsep}{2pt}
  \resizebox{\textwidth}{!}{%
  \begin{tabular}{lccccccccccc}
    \toprule
    \multirow{2}{*}{\textbf{Methods}} & \multirow{2}{*}{\textbf{Bits (W/A)}} &
    & \multicolumn{4}{c}{\textbf{LSUN-Bedrooms}} 
    & \multicolumn{4}{c}{\textbf{ImageNet}} \\
    \cmidrule(r){4-7} \cmidrule(r){8-11}
    & & & FID$\downarrow$ & sFID$\downarrow$ & Precision$\uparrow$ & Recall$\uparrow$ 
      & FID$\downarrow$ & sFID$\downarrow$ & Precision$\uparrow$ & Recall$\uparrow$ \\
    \midrule

    Full Prec. & 32/32 & 
      & 2.98 & 7.09 & -- & -- 
      & 10.91 & 7.67 & -- & -- \\
    \midrule

    PTQ4DM~\citep{PTQ4DM} & 4/32 & 
      & 4.83 & 7.94 & --     & -- 
      & --   & --   & --     & -- \\
    Q-Diffusion~\citep{Q-diffusion} & 4/32 & 
      & 4.20 & 7.66 & --     & -- 
      & 11.87 & 8.76 & --    & -- \\
    PTQD~\citep{PTQD} & 4/32 & 
      & 4.42 & 7.88 & --     & -- 
      & 11.65 & 9.06 & --    & -- \\
    TFMQ-DM~\citep{TFMQ-DM} & 4/32 & 
      & 3.60 & 7.61 & 65.92  & 44.88 
      & 10.50 & 7.98 & 92.91 & 30.24 \\
    \rowcolor[gray]{0.9}Ours & 4/32 & 
      & \textbf{3.14} & \textbf{7.22} & \textbf{66.11} & \textbf{45.50} 
      & \textbf{10.17} & \textbf{7.40} & \textbf{93.02} & \textbf{30.97} \\
    \midrule

    PTQ4DM~\citep{PTQ4DM} & 4/8 & 
      & 20.72 & 54.30 & --     & -- 
      & --    & --    & --     & -- \\
    Q-Diffusion~\citep{Q-diffusion} & 4/8 & 
      & 6.40 & 17.93 & --      & -- 
      & 10.68 & 14.85 & --     & -- \\
    PTQD~\citep{PTQD} & 4/8 & 
      & 5.94 & 15.16 & --      & -- 
      & 10.40 & 12.63 & --     & -- \\
    TFMQ-DM~\citep{TFMQ-DM} & 4/8 & 
      & 3.68 & 7.65 & 65.89   & 44.99 
      & 10.29 & \textbf{7.35} & 92.53 & \textbf{30.98} \\
    \rowcolor[gray]{0.9}Ours & 4/8 & 
      & \textbf{3.26} & \textbf{7.40} & \textbf{66.05} & \textbf{45.20} 
      & \textbf{9.96} & 7.55 & \textbf{92.75} & 30.71 \\
    \bottomrule
  \end{tabular}}%
  \label{tab:sota_combined}
\end{table}

% \begin{table}[h]
%   \centering
%   \small
%     \caption{Quantization results for image generation with LDM-4 on LSUN-Bedrooms and ImageNet at resolution $256 \times 256$.}

%   \setlength{\tabcolsep}{5pt}
%   \begin{tabular}{lcccccc}
%     \toprule
%     \multirow{2}{*}{\textbf{Methods}} & \multirow{2}{*}{\textbf{Bits (W/A)}} & & \multicolumn{2}{c}{\textbf{LSUN-Bedrooms 256$\times$256}} & \multicolumn{2}{c}{\textbf{ImageNet 256$\times$256}} \\
%     \cmidrule(r){4-5} \cmidrule(r){6-7}
%     & & & FID$\downarrow$ & sFID$\downarrow$ & FID$\downarrow$ & sFID$\downarrow$ \\
%     \midrule
%     Full Prec. & 32/32 & & 2.98 & 7.09 & 10.91 & 7.67 \\
%     \midrule
%     PTQ4DM~\citep{PTQ4DM} &  \multirow{5}{*}{} & & 4.83 & 7.94 & -- & -- \\
%     Q-Diffusion~\citep{Q-diffusion} & 4/32 & & 4.20 & 7.66 & 11.87 & 8.76 \\
%     PTQD~\citep{PTQD} &  & & 4.42 & 7.88 & 11.65 & 9.06 \\
%     TFMQ-DM~\citep{TFMQ-DM} &  & & 3.60 & 7.61 & 10.50 & 7.98 \\
%     \rowcolor[gray]{0.9}Ours &  & & \textbf{3.14} & \textbf{7.22} & \textbf{10.17} & \textbf{7.40} \\
%     \midrule
%     PTQ4DM~\citep{PTQ4DM} &  \multirow{5}{*}{4/8} & & 20.72 & 54.30 & -- & -- \\
%     Q-Diffusion~\citep{Q-diffusion} &  & & 6.40 & 17.93 & 10.68 & 14.85 \\
%     PTQD~\citep{PTQD} &  & & 5.94 & 15.16 & 10.40 & 12.63 \\
%     TFMQ-DM~\citep{TFMQ-DM} & & & 3.68 & 7.65 & 10.29 & \textbf{7.35} \\
%     \rowcolor[gray]{0.9}Ours &  & & \textbf{3.26} & \textbf{7.40} & \textbf{9.96} & 7.55 \\
%     \bottomrule
%   \end{tabular}
%   \label{tab:sota_combined}
% \end{table}

\vspace{-0.5cm}
\subsection{Comparison with the state-of-the-art quantization methods for diffusion models}
To validate the effectiveness of our method, we benchmark it against the current state-of-the-art post-training quantization techniques developed for diffusion models, such as TFMQ-DM~\citep{TFMQ-DM}, PTQD~\citep{PTQD}PTQ4DM~\citep{PTQ4DM} and Q-Diffusion~\citep{Q-diffusion}. For consistency and comparability, the baseline results are directly adopted from the TFMQ-DM study. Our evaluation spans a diverse set of datasets: CIFAR-10 ($32 \times 32$), LSUN-Bedrooms ($256 \times 256$) for unconditional generation tasks, as well as ImageNet ($256 \times 256$) for class-conditional image synthesis. All experiments adhere to the evaluation protocols established in prior work~\citep{TFMQ-DM}.

\vspace{0.2cm}
\noindent\textbf{Unconditional image generation.} To evaluate the effectiveness of our method under extreme low-bit quantization, we conduct comprehensive experiments on both low-resolution and high-resolution generative models. Specifically, we test DDPM on CIFAR-10 at $32 \times 32$ resolution, and LDM-4 on two high-resolution datasets: LSUN-Bedrooms at 
$256 \times 256$. For consistency and fair comparison, we employ the DDIM sampler~\citep{song2021denoising} with 100 steps for CIFAR-10 and 200 steps for the two high-resolution datasets.

Across all three benchmarks, our method consistently outperforms existing state-of-the-art techniques, particularly in low-bit regimes. As reported in Table~\ref{tab:sota_combined}, we achieve  state-of-the-art FID scores in all quantization settings. Specifically, on CIFAR-10, our approach surpasses the compared method method (TFMQ-DM) with FID improvements of $0.45$ and $0.46$ in W4A32 and W4A8 settings, respectively. Similar improvements are observed on LSUN-Bedrooms, with FID gains of $0.46$ (W4A32) and $0.42$ (W4A8). In summary, our approach demonstrates consistent and meaningful improvements over prior methods. 

%While reproducing baseline results, we verified the FID values of TFMQ-DM using their official implementation.%\footnote{The code is publicly available at \href{https://github.com/ModelTC/TFMQ-DM}{this link}. Our reproduced FID scores closely match those reported in their paper—for example, we obtain a FID of X in the W4A8 setting. However, sFID scores proved harder to replicate; for instance, we measured X compared to the reported X.} This highlights the robustness of our FID improvements, even accounting for minor reproducibility discrepancies.

\vspace{0.2cm}
\noindent\textbf{Class-conditional image generation.} We further evaluate our method on the high-resolution ImageNet dataset using LDM-4 and the DDIM sampler~\citep{song2021denoising} with 20 denoising steps. To ensure fair comparison, baseline results are adopted directly from the TFMQ-DM paper~\citep{TFMQ-DM}. As summarized in Table~\ref{tab:sota_combined}, our approach consistently surpasses existing methods across all quantization configurations.

In particular, the performance gain in the W4A32 setting is notable: our method reduces FID by $0.33$ and sFID by $0.58$ compared to TFMQ-DM, one of the current state-of-the-art PTQ methods for diffusion model. These results highlight our method’s ability to preserve generation quality under aggressive quantization, even on large-scale and visually complex datasets such as ImageNet.

\subsection{Ablation studies}
\paragraph{Ablation studies for the temperature parameter $\tau$.} 
Please refer to the Appendix for more ablation studies of our method.
We vary the value of $\tau$ from 0.2 to 2 and evaluate the model's performance on the CIFAR-10 dataset under the W4A32 setting, as shown in Table~\ref{tab:ablation_temperature}. As shown, the performance will degrade if we set $\mathbf{T}$ to smaller values.
\begin{table}[h]
\small
\centering
\caption{Ablation studies on CIFAR-10 dataset.}
\begin{minipage}{0.48\linewidth}
\centering
\subcaption{Effect of validation set size.}
\begin{tabular}{l|cccc}
\hline 
$\tau$ & $2\%$ & $5\%$ & $10\%$ & $20\%$ \\
\hline
FID$\downarrow$   & 4.55 & \textbf{4.32} & 4.59 & 4.75 \\
sFID$\downarrow$  & 4.71 & 4.61 & \textbf{4.38} & 4.51 \\
\bottomrule
\end{tabular}
\label{tab:ablation_validation}
\end{minipage}
\hfill
\begin{minipage}{0.48\linewidth}
\centering
\subcaption{Effect of temperature parameter $T$.}
\begin{tabular}{l|cccc}
\hline 
$\tau$ & $0.2$ & $0.5$ & $1$ & $2$ \\
\hline
FID$\downarrow$   & 4.85 & 4.55 & \textbf{4.28} & 4.32 \\
sFID$\downarrow$  & 4.74 & 4.67 & \textbf{4.56} & 4.61 \\
\bottomrule
\end{tabular}
\label{tab:ablation_temperature}
\end{minipage}
\label{tab:combined_ablation}
\end{table}

\paragraph{Ablation studies for the number of timesteps.} 
To evaluate the effectiveness of our method in the extreme case of very few timesteps, we test it on the ImageNet dataset (4/32 setting) using the DDIM sampler with 5, 10, and 20 timesteps. As shown in \cref{tab:ablation_timestep}, our method remains highly effective, outperforms the baseline TFMQ-DM under these challenging conditions.

\begin{table}[h!]
\centering
\small
\caption{Ablation studies for the number of inference timesteps. The results demonstrate that our method remains effective when the number of timesteps is small.}
\label{tab:ablation_timestep}
\begin{tabular}{l c c c}
\toprule
Methods & Timestep & FID ↓ & sFID ↓ \\
\midrule
TFMQ-DM \citep{TFMQ-DM} & 20 & 10.50 & 7.98\\
Ours        & 20 & \textbf{10.17} & \textbf{7.40} \\
\midrule
TFMQ-DM \citep{TFMQ-DM} & 10 & 9.01 & 12.75 \\
Ours        & 10 & \textbf{8.73} & \textbf{11.26} \\
\midrule
TFMQ-DM \citep{TFMQ-DM} & 5  & 19.10 & 38.69 \\
Ours        & 5  & \textbf{18.22} & \textbf{35.05} \\
\bottomrule
\end{tabular}
\end{table}

\paragraph{Ablation studies for the validation set size.} 
In practice, we only use a small part of the training set as the validation set (~ 5\%), therefore the total number of images that our methods use are equal to that of the baseline method (TFMQ-DM). To assess sensitivity, we evaluate the performance using various validation sizes below in \cref{tab:ablation_validation}. We observe that the method performs reliably across all sizes, with 5\% achieving the best FID and competitive sFID. Notably, increasing the validation size beyond this point does not consistently improve performance, possibly due to increased sample diversity making it harder to optimize the reweighting under fixed calibration cost.

%\paragraph{Ablation studies for the number of group $\mathbf{T}$.} We vary the number of group from 1 to 10 and evaluate the model's performance on the CIFAR-10 dataset under the W4A32 setting, as shown in Table~\ref{tab:param_group}. Performance tends to be higher when $\mathbf{T}$ is in the range of 1 to 5, while larger values (e.g., $\mathbf{T} = 10$) may lead to a slight degradation.

%\begin{table}[h]
%    \centering
%    \begin{tabular}{l|cccc}
%      \hline 
%       $Num. Group$ & $1$ & $2$ & $5$ & $10$  \tabularnewline
%       \hline
%        FID$\downarrow$ & 4.63 &4.55 & 4.28 & 4.35  \\
%      sFID$\downarrow$ & 4.51 &4.47 & 4.46 & 4.56  \\
%      \bottomrule
%    \end{tabular}
%      \caption{Ablation studies for the number of groups. The results are on the CIFAR-10 dataset with the W4A32 setting.}
%    \label{tab:param_group}
%  \end{table}

%\vspace{-0.3cm}
%\noindent \textbf{Visualization of the change in loss across group.}
%We visualize the difference in training loss between the quantized diffusion model trained with our sample-weighting strategy and one trained with uniform sample weights, using the CIFAR-10 dataset under the 4/32 quantization setting. The comparison is made across groups of timesteps sorted by ascending loss. Please refer to the Supplementary. % % % % % %

\noindent \textbf{Visualization of sample weight.}
We visualize the distribution of sample weights alongside their corresponding average gradient similarity scores across groups in the validation set. For each training sample, the gradient similarity score is computed by assessing the alignment of its gradient with those of each task, and then averaging the similarity scores across tasks. As illustrated in Figure~\ref{fig:sample_weight}, our method assigns higher weights to samples exhibiting stronger gradient alignment, which facilitates more consistent optimization across the diffusion process by prioritizing samples that reduce gradient conflicts and improve the overall convergence across timesteps
%  \begin{figure}[t]
%   \centering
%   \includegraphics[width=1\linewidth]{figure/weight_sample_similarity.pdf}
%    \caption{Visualization of the correlation between optimized sample weights and gradient alignment across groups. The red line represents the optimized sample weights, while the blue line indicates the average gradient cosine similarity between each training sample and the gradients of each group, averaged across 
%    all groups in the validation set. This demonstrates how the model emphasizes samples with stronger gradient alignment, effectively mitigating gradient conflicts during the diffusion process. }
%    \label{fig:sample_weight}
% \end{figure}

\begin{figure}[t]
  \centering
  \includegraphics[width=1\linewidth]{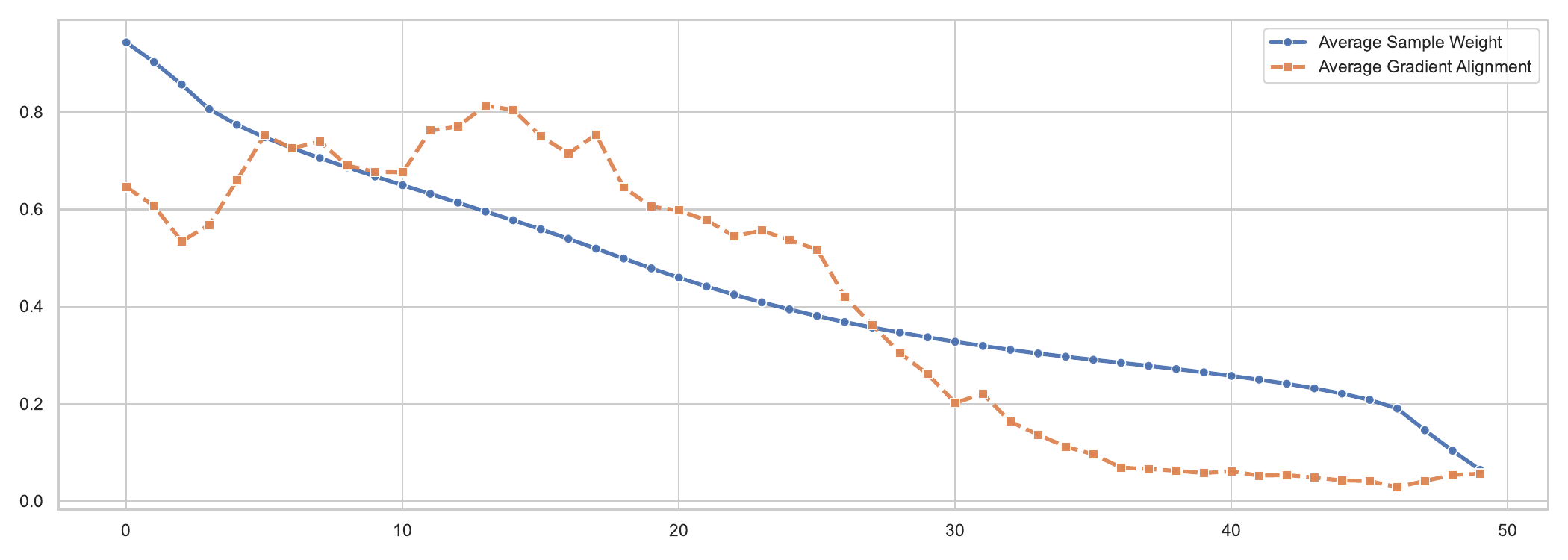}
   \caption{Visualization of the correlation between optimized sample weights and gradient alignments. All samples are sorted in descending order ot their sample weights, and divided uniformly into 50 groups. The blue line represents the average sample weight per group, while the red line indicates the average gradient alignment between samples in each group and the validation set. This demonstrates the positive correlation between gradient alignment and sample weight. }
   \label{fig:sample_weight}
\end{figure}

\vspace{-0.3cm}
\paragraph{The comparison of the computation cost and hardware efficiency.}
Although our method introduces additional computational overhead during training process, it remains competitively efficient. On LSUN-Bedrooms ($256 \times 256$) under the W4A8 setting, TFMQ-DM~\citep{TFMQ-DM} and Q-Diffusion~\citep{Q-diffusion}  requires 2.32 and 5.29 GPU hours for training cost, respectively. In comparison, our approach takes around 3.5 GPU hours, representing a moderate increase by 1 hour over TFMQ-DM, but still more efficient than Q-Diffusion. Crucially, this modest increase in training cost yields consistently superior FID results across all evaluated settings, demonstrating an effective trade-off between performance and computational burden. It’s also important to note that the added complexity is confined to the training stage. During inference, our method shares the same model structure and quantization format as TFMQ-DM, leading to identical hardware efficiency and latency at test time.

\section{Conclusion}

In this work, we address the overlooked issue of gradient conflict during post-training quantization (PTQ) of diffusion models, which arises from treating calibration samples across timesteps as equally important. We propose a meta-weighting framework that dynamically learns sample-wise importance by promoting gradient alignment across timesteps. This approach enables more effective calibration under quantization constraints. Extensive experiments on CIFAR-10, LSUN-Bedrooms, and ImageNet demonstrate consistent improvements over existing PTQ methods, highlighting the significance of timestep-aware sample weighting. %Our findings suggest that accounting for timestep-specific gradient dynamics can benefit not only quantization but potentially other post-training adaptation techniques as well.

\bibliography{iclr2026_conference}
\bibliographystyle{iclr2026_conference}

\appendix

\clearpage

 \section{Appendix}
\label{sec:proof}
\subsection{THE USE OF LARGE LANGUAGE MODELS
}
We used a large language model (ChatGPT) to help with editing this paper. It was only used for simple tasks such as fixing typos, rephrasing sentences for clarity, and improving word choice. All ideas, experiments, and analyses were done by the authors, and the use of LLMs does not affect the reproducibility of our work.

%according to our Assumption \ref{assumpt:1}, we have $C_{k,N} > \alpha_1, C_{k,N} < \beta_2 \quad \forall 0\leq k \leq \frac{N}{T}$. Similarly, $C_{k,N} > \alpha_2, C_{k,N} < \beta_1 \quad \forall \frac{N}{T}\leq k \leq 2\frac{N}{T}$. Therefore
%\begin{equation}
%\small
%\label{lemma_2:4}
%\begin{aligned}[b]
%\frac{C_{i,1}}{C_{j,1}} &=\frac{C_{i,1}}{C_{j,1}}  \cdots = \frac{C_{i,N}}{C_{j,N}}
% \\
% &= \frac{\sum_{k=1}^{\frac{N}{T}}C_{i,k}}{\sum_{k=1}^{\frac{N}{T}}C_{j,k}} > \frac{\alpha_1}{\beta_2} \\
% &= \frac{\sum_{k=\frac{N}{T}}^{2\frac{N}{T}}C_{i,k}}{\sum_{k=\frac{N}{T}}^{2\frac{N}{T}}C_{j,k}} < \frac{\beta_1}{\alpha_2} \\
%\end{aligned}
%\end{equation}

\subsection{Theoretical proofs}
\begin{lemma}[Restated from Lemma 4.3]

Let us define a second validation loss:
\begin{equation}
\small
\label{eq:val_loss_2}
\begin{aligned}[b]
    \mathcal{L}^{(2)}_{VAL}(\theta^{*}_Q,\theta_{FP},X^{(V)}) = \mathcal{L}^{(2)}_{GM}(\theta^{*}_Q,X^{(V)}) + \mathcal{L}_{MSE}(\theta^{*}_Q,\theta_{FP},X^{(V)}),
\end{aligned}
\end{equation}
The Algorithm 1 will  minimize $\mathcal{L}^{(2)}_{VAL}(\theta^{*}_Q,\theta_{FP},X^{(V)})$ in Eq.~(\ref{eq:val_loss_2}).
\end{lemma}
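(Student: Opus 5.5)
We have not seen Algorithm~\ref{alg:data_optimization}, so we take it to be the natural one. It keeps $\theta^{*}_Q(\omega)$ as the one-step unrolled model of Eq.~(\ref{eq:overall_optimization}). For each timestep group $t$ it computes the hypergradient $\mathcal{G}_{\omega,t}=\partial \mathcal{L}_{MSE}(\theta^{*}_Q(\omega),\theta_{FP},X_t^{(V)})/\partial\omega$. It then updates $\omega$ by descending the validation MSE together with a penalty on the pairwise disagreement of these hypergradients.

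The plan is to show that each update of Algorithm~\ref{alg:data_optimization} is a gradient (or first-order descent) step on $\mathcal{L}^{(2)}_{VAL}$ with respect to $\omega$. The standard descent lemma then gives monotone decrease for a small enough learning rate.

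\textbf{Step 1: write out the hypergradient.} By the chain rule through the unrolled step,
\begin{equation*}
\mathcal{G}_{\omega,t,i} \;=\; -\eta\,\Big\langle \frac{\partial \mathcal{L}_{MSE}(\theta^{*}_Q,\theta_{FP},X_t^{(V)})}{\partial \theta^{*}_Q},\,\frac{\partial \mathcal{L}_{MSE}(\theta_Q,\theta_{FP},x^{(T)}_i)}{\partial \theta_Q}\Big\rangle \;=\; -\eta\,\langle \mathcal{G}_{\theta^{*}_Q,t}, g_i\rangle ,
\end{equation*}
where $g_i$ is the training-sample gradient. So $\mathcal{L}^{(2)}_{GM}$ becomes an explicit function of $\omega$ involving only second-order quantities: the dependence on $\omega$ enters through $\mathcal{G}_{\theta^{*}_Q,t}$, and differentiating that brings in Hessian-vector products. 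This is the point of the surrogate, since $\mathcal{L}_{GM}$ would need third-order terms.

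\textbf{Step 2: match the algorithm's update term by term.} The MSE part of the update is exactly $\partial \mathcal{L}_{MSE}(\theta^{*}_Q(\omega),\theta_{FP},X^{(V)})/\partial\omega=\sum_t \mathcal{G}_{\omega,t}$ up to group averaging. For the alignment part, use the identity
\begin{equation*}
\sum_{t\neq k}\langle a_t,a_k\rangle=\Big\|\sum_t a_t\Big\|^2-\sum_t\|a_t\|^2 .
\end{equation*}
This shows that whatever pairwise-agreement quantity the algorithm differentiates, through \emph{higher}, is (up to a positive constant) $\mathcal{L}^{(2)}_{GM}$ as defined in Eq.~(\ref{eq:gm_loss_2}). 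Summing the two parts, the update direction is $\nabla_\omega \mathcal{L}^{(2)}_{VAL}$.

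\textbf{Step 3: conclude descent.} Assume $\mathcal{L}^{(2)}_{VAL}$ is $L$-smooth in $\omega$ on the relevant bounded region; this is plausible since the softmax parametrisation in Eq.~(\ref{eq:sample_weight_definition}) keeps $\omega$ in the simplex and the losses are smooth in $\theta$. The descent lemma then gives
\begin{equation*}
\mathcal{L}^{(2)}_{VAL}(\omega^{+})\le \mathcal{L}^{(2)}_{VAL}(\omega)-\tfrac{\alpha}{2}\|\nabla\mathcal{L}^{(2)}_{VAL}\|^2
\end{equation*}
for step size $\alpha\le 1/L$, and hence convergence to a stationary point. If the parameters are the logits $s$ rather than $\omega$, the same argument goes through via the chain rule through the softmax. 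The Adam optimiser would be handled only heuristically, as a preconditioned descent step.

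\textbf{Main obstacle.} The hardest step is Step 2: certifying that the quantity the algorithm actually differentiates coincides with $\mathcal{L}^{(2)}_{GM}$ rather than a normalised (cosine) or resampled-minibatch variant. If the algorithm uses cosine similarity, I would first show that its gradient has positive inner product with $-\nabla\mathcal{L}^{(2)}_{GM}$ whenever the norms $\|\mathcal{G}_{\omega,t}\|$ are bounded away from zero. That yields descent, though not exact equality. A secondary issue is the smoothness constant, which requires bounding the Hessian of $\mathcal{L}_{MSE}$ in $\theta$. Here I would simply impose a standard Lipschitz-Hessian assumption.
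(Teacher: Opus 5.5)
\textbf{There is a genuine gap: the proof is for a different algorithm.} Your argument rests on a guess about Algorithm~\ref{alg:data_optimization} that does not hold. The algorithm never forms or differentiates any pairwise-agreement quantity. Each inner iteration samples one timestep $t$, builds the one-step-ahead model $\theta^{*}_Q$, and takes a plain hypergradient step on the validation MSE of that timestep alone: $\omega^{(i)}=\omega^{(i-1)}-\eta\,\nabla_{\omega^{(i-1)}}\mathcal{L}_{MSE}(\theta^{*}_Q,\theta_{FP},X^{(V)}_t)$. After $\mathbf{T}$ such steps it sets $\omega\gets\omega^{(0)}+\eta_\omega\frac{1}{\mathbf{T}}(\omega^{(\mathbf{T})}-\omega^{(0)})$, which is a Reptile/Fish-type outer update. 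So your Step~2 has nothing to match term by term, and the identity $\sum_{t\neq k}\langle a_t,a_k\rangle=\|\sum_t a_t\|^2-\sum_t\|a_t\|^2$ plays no role. The content of the lemma is precisely that the alignment term is produced \emph{implicitly}. That is also how the algorithm avoids the costly higher-order derivatives that explicitly optimizing Eq.~(\ref{eq:overall_optimization}) would require. Your Step~1 chain-rule formula is correct, but it is the computation the paper uses for Lemma~\ref{lem:2}, not for this one.

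The missing idea is a second-order expansion of the sequential inner loop.
\begin{itemize}
\item Write $\mathcal{G}^{(t)}_{\omega,t}$ for the step-$t$ gradient evaluated at the displaced point $\omega^{(t-1)}$, and $\mathcal{G}^{(t)}_{\omega,1}$ for the same gradient at $\omega^{(0)}$. A Taylor expansion gives $\mathcal{G}^{(t)}_{\omega,t}\approx\mathcal{G}^{(t)}_{\omega,1}-\eta\,\mathcal{H}_{\omega,t}\sum_{i<t}\mathcal{G}^{(i)}_{\omega,1}+O(\eta^2)$.
\item Timesteps are drawn at random, so the order is exchangeable: $\mathbb{E}[\mathcal{H}_{\omega,j}\mathcal{G}^{(i)}_{\omega,1}]=\mathbb{E}[\mathcal{H}_{\omega,i}\mathcal{G}^{(j)}_{\omega,1}]$. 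Hence each cross term equals $\tfrac12\nabla_\omega\langle\mathcal{G}^{(i)}_{\omega,1},\mathcal{G}^{(t)}_{\omega,1}\rangle$ in expectation.
\item Summing over $t$, the displacement is $\omega^{(\mathbf{T})}-\omega^{(0)}=-\eta\bigl(\nabla_\omega\mathcal{L}_{MSE}+c\,\eta\,\nabla_\omega\mathcal{L}^{(2)}_{GM}\bigr)$ up to higher-order terms, with $c$ proportional to $\mathbf{T}(\mathbf{T}-1)$. The $\mathcal{L}^{(2)}_{GM}$ direction thus comes from the curvature cross terms of the inner trajectory, not from any explicit penalty.
\end{itemize}

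This also affects your Step~3. The relative weight of the gradient-matching term is $O(\eta\,\mathbf{T}(\mathbf{T}-1))$, not $1$. So the update direction is not exactly $\nabla_\omega\mathcal{L}^{(2)}_{VAL}$ as defined in the lemma. A descent-lemma argument would have to target this weighted combination, and would hold only in expectation over the sampling order and up to the Taylor remainder.
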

\begin{proof}[Proof of Lemma 1.1]
%Let us first define some notations. For each loop with $T$ steps of gradient updates, we assume that the evaluated validation set is randomly sampled from one of the available $T$ tasks. For reasons that will become clearer later, take extra note that the subscript $i$ here denotes the index of step, rather than the index of domain.
Let us denote $\mathcal{G}^{(t)}_{\omega,i} = \frac{\partial \mathcal{L}_{MSE}(\theta^{*}_Q,\theta_{FP},X_t^{(V)})}{ \partial \omega^{(i-1)}}$ as the gradient w.r.t $\omega^{(i)}$ when evaluated on the validation set at the t-th iteration, $\omega^{(i-1)}$ denotes the samples weight after the (i-1)-th iteration and before the i-th iteration optimized with Algorithm 1, $\omega^{(0)}$ denotes the initial sample weights. We have: 
\begin{equation}
\small
\label{eq:proof_0}
\begin{aligned}[b]
\omega^{(T)} - \omega^{(0)} &= -\eta (\sum_{t=1}^{\mathbf{T}} \mathcal{G}^{(t)}_{\omega,t}), \\
    %\mathcal{G}^{(i)}_{\omega,i} &=  \frac{\partial \mathcal{L}_{MSE}(\theta^{*}_Q,\theta_{FP},x^{(V)}_{i})}{ \partial \omega^{(i)}} \\
    %\mathcal{G}_{\omega,i} &= \frac{\partial \mathcal{L}_{MSE}(\theta^{*}_Q,\theta_{FP},x^{(V)}_{i})}{ \partial \omega^{(0)}} \\
    %\theta_Q^{(i)} &= \theta_Q^{(i-1)} + \alpha \frac{\partial \mathcal{L}_{MSE}(\theta_Q,\theta_{FP},x^{(V)}_{i})}{\partial \theta_Q }\frac{\partial \mathcal{L}_{MSE}(\theta^{*}_Q,\theta_{FP},x^{(V)}_{i})}{\partial \omega^{(i)}} \\
    %\mathcal{H} &= \frac{\partial^2 \mathcal{L}_{MSE}(\theta_Q,\theta_{FP},x^{(V)}_{i})}{\partial^2 \theta_Q } \\
\end{aligned}
\end{equation}

%Let us denote $\{\mathtt{g}_{\omega,i}\}_{i=1}$ as the gradient of the $i^{th}$ validation task evaluated on the meta model $\theta^{*}$ w.r.t the initial samples weight $\omega$ of the second algorithm. 
Using First Order Taylor approximation around $\omega^{(0)}$ and combine with Eq.~(\ref{eq:proof_0}) we have:
\begin{equation}
\small
\label{eq:proof_1}
\begin{aligned}[b]
    \mathcal{G}^{(t)}_{\omega,t} &\approx \mathcal{G}^{(t)}_{\omega,1} + \mathcal{H}_{\omega,t}(\omega^{(t-1)} -\omega^{(0)})^T \\
    &= \mathcal{G}^{(t)}_{\omega,1} - \eta \mathcal{H}_{\omega,t}(\sum_{i=1}^{t-1} \mathcal{G}^{(i)}_{\omega,i})^T \\
    &= \mathcal{G}^{(t)}_{\omega,1} - O(\eta ), \\
\end{aligned}
\end{equation}
where $\mathcal{H}_{\omega,t}$ denotes the Hessian matrix of the model loss with respect to the sample weights $\omega$, evaluated on the validation data at the t-th iteration.

Replace $\mathcal{G}^{(i)}_{\omega,i} = \mathcal{G}^{(i)}_{\omega,1} - O(\eta ) \quad \forall i=1,2,\dots t-1$ we have:
\begin{equation}
\small
\label{eq:proof_2}
\begin{aligned}[b]
    \mathcal{G}^{(t)}_{\omega,t}  &= \mathcal{G}^{(t)}_{\omega,1} - \eta \mathcal{H}_{\omega,t}(\sum_{i=1}^{t-1} \mathcal{G}^{(i)}_{\omega,i})^T \\
    &\approx  \mathcal{G}^{(t)}_{\omega,1} - \eta \mathcal{H}_{\omega,t}(\sum_{i=1}^{t-1} (\mathcal{G}^{(i)}_{\omega,1}-O(\eta )))^T \\
    &\approx \mathcal{G}^{(t)}_{\omega,1}- \eta \mathcal{H}_{\omega,t}(\sum_{i=1}^{t-1} (\mathcal{G}^{(i)}_{\omega,1}) + O(\eta ^2) \\
\end{aligned}
\end{equation}
For any $i \leq j$, because at each iteration $i$ and $j$ we randomly sample validation data of a timestep, the iteration indexes are interchangeable, i.e. $\mathbb{E}[ \mathcal{H}_{\omega,j}\mathcal{G}^{(i)}_{\omega,1}] = \mathbb{E}[ \mathcal{H}_{\omega,i}\mathcal{G}^{(j)}_{\omega,1}]$. Therefore, we have:
\begin{equation}
\small
\label{eq:proof_3}
\begin{aligned}[b]
    \mathbb{E}[\frac{\partial {\mathcal{G}^{(i)}_{\omega,1}}^T \mathcal{G}^{(j)}_{\omega,1}}{\partial\omega}] &= \mathbb{E}[ \mathcal{H}_{\omega,j}\mathcal{G}^{(i)}_{\omega,1} +  \mathcal{H}_{\omega,i}\mathcal{G}^{(j)}_{\omega,1}] \\
    &= 2\mathbb{E}[H_{\omega,j}\mathcal{G}^{(i)}_{\omega,1}] \\ 
    \implies \mathcal{H}_{\omega,t}(\sum_{i=1}^{t-1} (\mathcal{G}^{(i)}_{\omega,1}) &= \frac{1}{2}\sum_{i =1}^{t-1} \frac{\partial {\mathcal{G}^{(i)}_{\omega,1}}^T \mathcal{G}^{(t)}_{\omega,1}}{\partial\omega}  \\
\end{aligned}
\end{equation}
Combine Eq.~(\ref{eq:proof_2}) and Eq.~(\ref{eq:proof_3}) we have:
\begin{equation}
\small
\label{eq:proof_4}
\begin{aligned}[b]
    \mathcal{G}^{(t)}_{\omega,t} & \approx \mathcal{G}^{(t)}_{\omega,1}- \eta\mathcal{H}_{\omega,t}(\sum_{i=1}^{t-1} (\mathcal{G}^{(i)}_{\omega,1})  \\
    &= \mathcal{G}^{(t)}_{\omega,1} - \frac{\eta}{2}\sum_{i =1}^{t-1} \frac{\partial {\mathcal{G}^{(i)}_{\omega,1}}^T \mathcal{G}^{(t)}_{\omega,1}}{\partial\omega} \\
\end{aligned}
\end{equation}
Combine Eq.~(\ref{eq:proof_0} )and Eq.~(\ref{eq:proof_4}) we have:
\begin{equation}
\small
\label{eq:proof_5}
\begin{aligned}[b]
\omega^{(T)} - \omega^{(0)} &= -\eta (\sum_{t=1}^{\mathbf{T}} \mathcal{G}^{(t)}_{\omega,t}) \\
&= -\eta  (\sum_{t=1}^{\mathbf{T}} (\mathcal{G}^{(t)}_{\omega,1} - \frac{\eta }{2}\sum_{i =1}^{t-1} \frac{\partial {\mathcal{G}^{(i)}_{\omega,1}}^T \mathcal{G}^{(t)}_{\omega,1}}{\partial\omega})), \\
&= -\eta  (\sum_{t=1}^{\mathbf{T}}\mathcal{G}^{(t)}_{\omega,1}  - \frac{\eta }{2}\sum_{1 \leq i < j \leq T} \frac{\partial {\mathcal{G}^{(i)}_{\omega,1}}^T \mathcal{G}^{(j)}_{\omega,1}}{\partial\omega}), \\
&= -\eta  (\sum_{t=1}^{\mathbf{T}}\frac{\partial \mathcal{L}_{MSE}(\theta^{*}_Q,\theta_{FP},X^{(V)}_{t})}{\partial \omega} +\frac{\eta T (T-1) }{4} \frac{\partial\mathcal{L}^{(2)}_{GM}(\theta^{*}_Q,X^{(V)})}{\partial \omega}), \\
&= -\eta  (\frac{\partial \mathcal{L}_{MSE}(\theta^{*}_Q,\theta_{FP},X^{(V)})}{\partial \omega} +\frac{\eta T (T-1) }{4} \frac{\partial\mathcal{L}^{(2)}_{GM}(\theta^{*}_Q,X^{(V)})}{\partial \omega}), \\
\end{aligned}
\end{equation}
According to Eq.(~\ref{eq:proof_5}), the update in $\omega$ of Algorithm 1 corresponds to a gradient descent step on a composite loss function comprising $\mathcal{L}_{MSE}(.)$ and $\mathcal{L}^{(2)}_{GM}(.)$. This indicates that the optimization of Algorithm 1 effectively minimizes the combined validation loss $\mathcal{L}^{(2)}_{VAL}(\theta^{*}_Q,\theta_{FP},X^{(V)})$ as defined in Equation~(\ref{eq:val_loss_2}).

%We can see that the optimization of Algorithm ~\ref{alg:data_optimization} leads to a gradient descent step $\omega$ by a linear combination of gradient of $\mathcal{L}_{MSE}(.)$ loss and $\mathcal{L}_{GM}(.)$ loss. Therefore, The Algorithm \ref{alg:data_optimization} will  minimize $\mathcal{L}^{(2)}_{VAL}(\theta^{*}_Q,\theta_{FP},X^{(V)})$ in Eq.~(\ref{eq:val_loss_2}).
%If $\alpha=1$ and $\beta=\frac{\eta}{2}$ then we have:
%\begin{equation}
%\small
%\label{eq:proof_0}
%\begin{aligned}[b]
%\theta^{*}^{(T)}_Q - \theta^{*}_Q  &= -\eta  (\sum_{t=1}^{T}(\sum_{i=1}^{|X_t^{(V)}|}\frac{\mathcal{L}_{VAL}(\theta^{*}_Q,\theta_{FP},X_t^{(V)})}{\partial \omega}))(\sum_{i=1}^{|X^{(T)}|} \frac{\partial \mathcal{L}_{MSE}(\theta_Q,\theta_{FP},x^{(T)}_{t,i})}{\partial \theta_Q}), \\
%\end{aligned}
%\end{equation}
\end{proof}
\begin{lemma} [Restated from Lemma 4.2]Let us denotes $\mathcal{G}_{\omega,t}  = \frac{\partial \mathcal{L}_{MSE}( \theta^{*}_Q,\theta_{FP},X_t^{(V)})}{\partial \omega}$. The second gradient matching loss $\mathcal{L}^{(2)}_{GM}(.)$ for gradients w.r.t the sample weight $\omega$  is defined as:

\begin{equation}
\small
\label{eq:gm_loss_2}
\begin{aligned}[b]
    \mathcal{L}^{(2)}_{GM}(\theta^{*}_Q,X^{(V)}) = - \frac{2}{T*(T-1)}\sum_{t \neq k}\mathcal{G}_{\omega,t}\mathcal{G}_{\omega,k},
\end{aligned}
\end{equation}

The minimization of $\mathcal{L}^{(2)}_{GM}(.)$ will implicitly lead to the minimization of $\mathcal{L}_{GM}(.)$, in the sense that a minimizer of  \( \mathcal{L}^{(2)}_{\text{GM}} \) corresponds to a minimizer of the target loss \( \mathcal{L}_{\text{GM}} \).
\end{lemma}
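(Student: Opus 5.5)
The plan is to relate the two gradient-matching losses through the chain rule, because $\theta^{*}_Q$ depends on $\omega$ only through the one-step update in Eq.~(\ref{eq:overall_optimization}). Write $g_i = \partial \mathcal{L}_{MSE}(\theta_Q,\theta_{FP},x^{(T)}_i)/\partial \theta_Q$ for the per-sample training gradients, and stack them as columns of a matrix $J=[g_1,\dots,g_{|X^{(T)}|}]$. Then $\partial \theta^{*}_Q/\partial \omega = -\eta J$ is a constant Jacobian, since $\theta_Q$ itself does not depend on $\omega$. Hence
\begin{equation*}
\mathcal{G}_{\omega,t} = -\eta J^{\top}\mathcal{G}_{\theta^{*}_Q,t},
\qquad
\mathcal{G}_{\omega,t}^{\top}\mathcal{G}_{\omega,k} = \eta^{2}\,\mathcal{G}_{\theta^{*}_Q,t}^{\top} M\,\mathcal{G}_{\theta^{*}_Q,k},
\qquad M = JJ^{\top}=\sum_i g_i g_i^{\top}\succeq 0 .
\end{equation*}
Consequently $\mathcal{L}^{(2)}_{GM}$ is exactly the loss $\mathcal{L}_{GM}$ of Eq.~(\ref{eq:gm_loss}), with the Euclidean inner product replaced by the positive semidefinite inner product $\langle u,v\rangle_M = u^{\top}Mv$ induced by the calibration gradients. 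Everything then reduces to comparing the $M$-inner product with the identity one on the span of the validation gradients $\{\mathcal{G}_{\theta^{*}_Q,t}\}_{t=1}^{\mathbf{T}}$.

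The key step is the isotropic case. Suppose $M$ acts as $c\,I$ with $c>0$ on the subspace spanned by the validation gradients. This is a richness assumption on the calibration set: its gradients cover the validation-gradient directions with no preferred direction. Under it, $\mathcal{L}^{(2)}_{GM} = c\,\eta^{2}\,\mathcal{L}_{GM}$ identically as functions of $\omega$. A positive rescaling preserves the argmin set, so any minimizer of $\mathcal{L}^{(2)}_{GM}$ is a minimizer of $\mathcal{L}_{GM}$, which is the claimed statement. I would present the lemma under this assumption, and note that it also holds approximately when the number of calibration samples is large and their gradients are roughly isotropic.

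The main obstacle is removing or weakening the isotropy assumption, because the cross terms $\mathcal{G}_t^{\top}\mathcal{G}_k$ with $t\neq k$ are not sign-definite. A direct eigenvalue sandwich $\lambda_{\min}(M)\,u^{\top}v \le u^{\top}Mv \le \lambda_{\max}(M)\,u^{\top}v$ is therefore false. The first thing I would try is the identity
\begin{equation*}
\sum_{t\neq k}\mathcal{G}_t^{\top}\mathcal{G}_k = \Big\|\sum_t \mathcal{G}_t\Big\|^{2} - \sum_t \|\mathcal{G}_t\|^{2},
\end{equation*}
applied in both norms. Each squared norm is then sign-definite and can be sandwiched between $\lambda_{\min}$ and $\lambda_{\max}$ of $M$ restricted to the span of the validation gradients. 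This gives
\begin{equation*}
\big|\mathcal{L}^{(2)}_{GM} - \bar c\,\eta^{2}\mathcal{L}_{GM}\big| \le \eta^{2}(\lambda_{\max}-\lambda_{\min})\, C ,
\end{equation*}
where $\bar c$ is a mean eigenvalue and $C$ is a bound on the validation gradient norms. A minimizer of $\mathcal{L}^{(2)}_{GM}$ is then an approximate minimizer of $\mathcal{L}_{GM}$, with an error controlled by the condition number of $M$ on that subspace. The result becomes exact, as in the previous paragraph, when $\lambda_{\max}=\lambda_{\min}$.
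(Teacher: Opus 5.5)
Your chain-rule step is correct and matches the paper's starting point: $\mathcal{G}_{\omega,t}=-\eta J^{\top}\mathcal{G}_{\theta^{*}_Q,t}$, so $\mathcal{L}^{(2)}_{GM}$ is $\mathcal{L}_{GM}$ measured in the metric $M=JJ^{\top}$. The gap is in the next step. Your isotropy hypothesis needs a single constant $c$ valid for every $\omega$, because the span of the validation gradients moves with $\theta^{*}_Q(\omega)$. With that, it is exactly the assertion that the two losses are proportional. The conclusion ``minimizers coincide'' then becomes a tautology, and the real content of the lemma (why alignment in $\omega$-space forces alignment in $\theta$-space) is assumed rather than proved. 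Nothing in the setting supplies this hypothesis. $M$ has rank at most $|X^{(T)}|$, typically far below the number of parameters, and there is no reason for the calibration gradients to have equal singular values along the validation directions. Your fallback bound does not close the gap either. Its relative error is of order $(\lambda_{\max}-\lambda_{\min})/\bar c$, which is small only if $M$ is already nearly isotropic on the span. Moreover, $\lambda_{\min}$ there can be arbitrarily close to $0$: a validation direction nearly orthogonal to every $g_i$ is nearly annihilated by $J^{\top}$.

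What is missing is a link between the validation gradients and the calibration gradients. The paper obtains it from how $X^{(V)}$ is constructed and from small $\eta$. It reads ``minimizer'' as perfect alignment, $\cos(\mathcal{G}_{\omega,i},\mathcal{G}_{\omega,j})=1$, and argues by contradiction that this forces $\cos(\mathcal{G}_{\theta^{*}_Q,i},\mathcal{G}_{\theta^{*}_Q,j})=1$. Since $X_t^{(V)}$ and $X_t^{(T)}$ share a distribution, it takes the direction of $\mathcal{G}_{\theta^{*}_Q,t}$ to be that of the calibration gradient at $\theta^{*}_Q$. A first-order Taylor expansion around $\theta_Q$ then writes this as $\mathcal{G}^{(T)}_{\theta_Q,t}$, a combination of columns of $J$, plus an $O(\eta)$ term. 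For $\eta$ below an explicit threshold, the paper plays this against the constant ratios $C_{i,k}/C_{j,k}$ of its cosine matrix. Stripped down, the mechanism is as follows. Alignment in $\omega$-space gives $J^{\top}(\mathcal{G}_{\theta^{*}_Q,i}-\lambda\mathcal{G}_{\theta^{*}_Q,j})=0$ with $\lambda>0$. If the validation gradients lie (approximately) in the range of $J$, then $J^{\top}$ is injective there, so the two gradients are positively parallel. This needs only injectivity of $J^{\top}$ on the validation directions, motivated by how the data are built, not isotropy. If you keep your literal argmin reading, you still need this link, and you must accept only approximate minimizers when $M$ is anisotropic. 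Under the paper's alignment reading, the link together with injectivity suffices.
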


\begin{proof}[Proof of Lemma 1.2]

To demonstrate that minimizing \( \mathcal{L}^{(2)}_{GM}(\cdot) \) leads to the minimization of \( \mathcal{L}_{GM}(\cdot) \), we show that under sufficiently small learning rate $\eta$, the optimality of \( \mathcal{L}^{(2)}_{GM}(\cdot) \) implies the optimality of \( \mathcal{L}_{GM}(\cdot) \). 
Let \( \{ \mathcal{G}^{(T)}_{\theta_Q,t} \}_{t=1}^{\mathbf{T}} \) denotes the set of gradient w.r.t the intial quantized model's parameters $\theta_Q$ when evaluated on $\mathbf{T}$ timestep-specific training sets $\{X_t^{(T)} \}_{t=1}^{\mathbf{T}}$. Similarly,  we define \( \{ \mathcal{G}^{(T)}_{\theta^{*}_Q,t} \}_{t=1}^{\mathbf{T}} \) as the set of gradient w.r.t the meta model's parameters $\theta^{*}_Q$ when evaluated on $\mathbf{T}$ timestep-specific training subsets.
We aim to prove that, under sufficiently small learning rate $\eta$, for any \( 1 \leq i < j \leq \mathbf{T} \), if
\[
\cos(\mathcal{G}_{\omega,i}, \mathcal{G}_{\omega,j}) = 1,
\]
then it follows that
\[
\cos(\mathcal{G}_{\theta^{*}_Q,i}, \mathcal{G}_{\theta^{*}_Q,j}) = 1.
\]

We begin by expressing the gradients using the chain rule:
\begin{equation}
\small
\label{lemma_2:0}
\begin{aligned}[b]
\mathcal{G}_{\omega,t} &= \frac{\partial \mathcal{L}_{MSE}( \theta^{*}_Q,\theta_{FP},X_t^{(V)})}{\partial \omega}^T \\
&= \frac{\partial \mathcal{L}_{MSE}( \theta^{*}_Q,\theta_{FP},X_t^{(V)})}{\partial \theta^{*}_Q}^T \frac{\partial \theta^{*}_Q}{\partial \omega}  \\
&= \frac{\partial \mathcal{L}_{MSE}( \theta^{*}_Q,\theta_{FP},X_t^{(V)})}{\partial \theta^{*}_Q}^T \frac{\partial \theta^{*}_Q}{\partial \omega}  \\
&= \mathcal{G}_{\theta^{*}_Q,t}^T \frac{\partial (\theta_Q - \eta  \sum_{i=1}^{|X^{(T)}|} \omega_{i}\frac{\partial \mathcal{L}_{MSE}(\theta_Q,\theta_{FP},x^{(T)}_{i})}{\partial \theta_Q})z}{\partial \omega} \\
&= -\eta\mathcal{G}_{\theta^{*}_Q,t}^T  \begin{bmatrix}
\frac{\mathcal{L}_{MSE}(\theta_Q,\theta_{FP},x^{(T)}_{1})}{\partial \theta_Q} & \frac{\mathcal{L}_{MSE}(\theta_Q,\theta_{FP},x^{(T)}_{2})}{\partial \theta_Q} & \cdots & \frac{\mathcal{L}_{MSE}(\theta_Q,\theta_{FP},x^{(N)}_{1})}{\partial \theta_Q}
\end{bmatrix}
 \\
\end{aligned}
\end{equation}

Let us define a matrix $C$ size $T \times N$ where $C_{t,j} =cos(\mathcal{G}_{\theta^{*}_Q,t},\frac{\mathcal{L}_{MSE}(\theta_Q,\theta_{FP},x^{(T)}_{j})}{\partial \theta_Q}) $ denoting the cosine similarity between the meta model's gradient evaluated on the validation data at timestep $t$ and the original model's gradient when evaluated on $j^{th}$ training samples. 

From Eq.~(\ref{lemma_2:0}), for any $1\leq i < j \leq T$, if $\cos(\mathcal{G}_{\omega,i}, \mathcal{G}_{\omega,j}) = 1$, then we have:
\begin{equation}
\small
\label{lemma_2:2}
\begin{aligned}[b]
\frac{C_{i,1}}{C_{j,1}} =\frac{C_{i,2}}{C_{j,2}} = & \cdots =& \frac{C_{i,N}}{C_{j,N}} >0 \qquad \forall k \in \{1, \dots, N\} \text{ with } C_{j,k} \ne 0
 \\
\end{aligned}
\end{equation}
Supposed $i=1$, $j=2$, then $\cos(\mathcal{G}_{\omega,1}, \mathcal{G}_{\omega,2}) = 1$ and we assume that  $\cos(\mathcal{G}_{\theta^{*}_Q,1}, \mathcal{G}_{\theta^{*}_Q,2}) =\gamma < 1$, we will prove that this lead to contradiction. Using First-Order Taylor approximation around $\theta_Q$ we have:
\begin{equation}
\small
\label{lemma_2:3}
\begin{aligned}[b]
\frac{\mathcal{G}_{\theta^{*}_Q,1}}{\norm{\mathcal{G}_{\theta^{*}_Q,1}}} &= \frac{\mathcal{G}^{(T)}_{\theta^{*}_Q,1}} {\norm{\mathcal{G}^{(T)}_{\theta^{*}_Q,1}}}  \quad \quad \qquad \textbf{($X_1^{(T)}$ and $X_1^{(V)}$ has similar distribution)} \\
&\approx \frac{1}{\norm{\mathcal{G}^{(T)}_{\theta^{*}_Q,1}} } (\mathcal{G}^{(T)}_{\theta_Q,1} + (\theta^{*}_Q - \theta_Q)\mathcal{H}_{1}) \\
&= \frac{1}{\norm{\mathcal{G}^{(T)}_{\theta^{*}_Q,1}} }( \mathcal{G}^{(T)}_{\theta_Q,1} - \eta  \sum_{i=1}^{|X^{(T)}|} \omega_{i}\frac{\partial \mathcal{L}_{MSE}(\theta_Q,\theta_{FP},x^{(T)}_{i})}{\partial \theta_Q}\mathcal{H}_{1}),
\end{aligned}
\end{equation}

where $\mathcal{H}_{1}$ denotes the Hessian matrix w.r.t the model weight $\theta_Q$ when evaluated on the training set $X^{(T)}_{1}$ for the timestep $1$.

Let $B = \frac{\mathcal{G}_{\theta^{*}_Q,1}}{\norm{\mathcal{G}_{\theta^{*}_Q,1}}}  - \frac{\mathcal{G}_{\theta^{*}_Q,2}}{\norm{\mathcal{G}_{\theta^{*}_Q,2}}}$, we have 
\begin{equation}
\small
\label{lemma_2:3_5}
\begin{aligned}[b]
\norm{B} &= \norm{\frac{\mathcal{G}_{\theta^{*}_Q,1}}{\norm{\mathcal{G}_{\theta^{*}_Q,1}}}  - \frac{\mathcal{G}_{\theta^{*}_Q,2}}{\norm{\mathcal{G}_{\theta^{*}_Q,2}}}} \leq \norm{\frac{\mathcal{G}_{\theta^{*}_Q,1}}{\norm{\mathcal{G}_{\theta^{*}_Q,1}}} } + \norm{\frac{\mathcal{G}_{\theta^{*}_Q,2}}{\norm{\mathcal{G}_{\theta^{*}_Q,2}}}} = 2
\end{aligned}
\end{equation}

Therefore, multiply both size of Eq.~(\ref{lemma_2:3}) with $B$ we have::
\begin{equation}
\small
\label{lemma_2:3.6}
\begin{aligned}[b]
&(\frac{\mathcal{G}_{\theta^{*}_Q,1}}{\norm{\mathcal{G}_{\theta^{*}_Q,1}}})^T B
=\frac{1}{\norm{\mathcal{G}^{(T)}_{\theta^{*}_Q,1} }}( \mathcal{G}^{(T)}_{\theta_Q,1} - \eta  \sum_{i=1}^{|X^{(T)}|} \omega_{i}\frac{\partial \mathcal{L}_{MSE}(\theta_Q,\theta_{FP},x^{(T)}_{i})}{\partial \theta_Q}\mathcal{H}_{1})^T B
 \\
 \implies &\frac{1}{\norm{\mathcal{G}^{(T)}_{\theta^{*}_Q,1}} } (\mathcal{G}^{(T)}_{\theta_Q,1})^T B = (\frac{\mathcal{G}_{\theta^{*}_Q,1}}{\norm{\mathcal{G}_{\theta^{*}_Q,1}}})^T B + \eta  \frac{1}{\norm{\mathcal{G}^{(T)}_{\theta^{*}_Q,1} }} (\sum_{i=1}^{|X^{(T)}|} \omega_{i}\frac{\partial \mathcal{L}_{MSE}(\theta_Q,\theta_{FP},x^{(T)}_{i})}{\partial \theta_Q}\mathcal{H}_{1})^T B \\
 & \quad \quad \quad \quad \quad \quad \quad  = \quad 1 - \gamma \quad \quad \quad +   \eta  \frac{1}{\norm{\mathcal{G}^{(T)}_{\theta^{*}_Q,1} }}(\sum_{i=1}^{|X^{(T)}|} \omega_{i}\frac{\partial \mathcal{L}_{MSE}(\theta_Q,\theta_{FP},x^{(T)}_{i})}{\partial \theta_Q}\mathcal{H}_{1})^T B \\
\end{aligned}
\end{equation}

Let us denote $N^{(T)} = \max_{x^{(T)}_i}\norm{ \frac{\partial \mathcal{L}_{MSE}(\theta_Q,\theta_{FP},x^{(T)}_{i})}{\partial \theta_Q}}$. Because $\sum_{i} \omega_i = 1$ and $\norm{B} \leq 2$ according to Eq.~(\ref{lemma_2:3_5}), we then have:
\begin{equation}
\small
\label{lemma_2:3.7}
\begin{aligned}[b]
 \left| \eta  \frac{1}{\norm{\mathcal{G}^{(T)}_{\theta^{*}_Q,1}} }(\sum_{i=1}^{|X^{(T)}|} \omega_{i}\frac{\partial \mathcal{L}_{MSE}(\theta_Q,\theta_{FP},x^{(T)}_{i})}{\partial \theta_Q}\mathcal{H}_{1})^T B \right|  &\leq  \frac{\eta}{\norm{\mathcal{G}^{(T)}_{\theta^{*}_Q,1} }}\norm{\mathcal{H}_{1}}_2 \norm{ \sum_{i=1}^{|X^{(T)}|} \omega_{i}\frac{\partial \mathcal{L}_{MSE}(\theta_Q,\theta_{FP},x^{(T)}_{i})}{\partial \theta_Q}} \norm{B} \\
 & \leq 2\eta \norm{\mathcal{H}_1}_{2}\frac{N^{(T)} }{\norm{\mathcal{G}^{(T)}_{\theta^{*}_Q,1}} }, \\
\end{aligned}
\end{equation}
where $\norm{\mathcal{H}_1}_{2}$ denotes the spectral norm of $\mathcal{H}_1$. Combining Eq.~(\ref{lemma_2:3.7}) and Eq.~(\ref{lemma_2:3.6}) we have:

\begin{equation}
\small
\label{lemma_2:3.8}
\begin{aligned}[b]
 \frac{1}{\norm{\mathcal{G}^{(T)}_{\theta^{*}_Q,1}} } (\mathcal{G}^{(T)}_{\theta_Q,1})^T B & = \quad 1 - \gamma  \quad +   \eta  \frac{1}{\norm{\mathcal{G}^{(T)}_{\theta^{*}_Q,1} }}(\sum_{i=1}^{|X^{(T)}|} \omega_{i}\frac{\partial \mathcal{L}_{MSE}(\theta_Q,\theta_{FP},x^{(T)}_{i})}{\partial \theta_Q}\mathcal{H}_{1})^T B \\
 & \geq \quad  1 - \gamma  \quad -  2\eta \norm{\mathcal{H}_1}_{2}\frac{N^{(T)} }{\norm{\mathcal{G}^{(T)}_{\theta^{*}_Q,1}} }  > 0 \quad \text{for } \eta <\frac{(1 - \gamma) \norm{\mathcal{G}^{(T)}_{\theta^{*}_Q,1}}}{2\norm{\mathcal{H}_1}_{2} N^{(T)}} \\
\end{aligned}
\end{equation}
Therefore, for sufficiently small learning rate $\eta$, we have $(\mathcal{G}^{(T)}_{\theta_Q,1})^T B > 0 $. This implies that there exist at least a single training sample $x^{(T)}_k \in X^{(T)}$ such that 

\begin{equation}
\small
\begin{aligned}
 & \frac{\partial \mathcal{L}_{MSE}(\theta_Q,\theta_{FP},x^{(T)}_{k})}{\partial \theta_Q}^T B >  0 \\
\implies & \frac{\partial \mathcal{L}_{MSE}(\theta_Q,\theta_{FP},x^{(T)}_{k})}{\partial \theta_Q}^T(\frac{\mathcal{G}_{\theta^{*}_Q,1}}{\norm{\mathcal{G}_{\theta^{*}_Q,1}}}  - \frac{\mathcal{G}_{\theta^{*}_Q,2}}{\norm{\mathcal{G}_{\theta^{*}_Q,2}}}) >0  \\
\implies & C_{1,k} > C_{2,k}
\end{aligned}
\end{equation}
%Similarly, there exist a training sample $x^{(T)}_{l}$ such that $C_{2,l} > C_{1,l}$. This contradicts since $\frac{C_{1,k}}{C_{2,k}} = \frac{C_{1,l}}{C_{2,l}} >0 \quad  \text{ if } C_{2,k}  \ne 0 \text{ and } C_{2,l} \ne 0$ (If $C_{2,k}  = 0$ then $C_{1,k}  = 0$, which contradicts that  $C_{1,k} > C_{2,k}$). Therefore if $\cos(\mathcal{G}_{\omega,i}, \mathcal{G}_{\omega,j}) = 1$ then $\cos(\mathcal{G}_{\theta^{*}_Q,i}, \mathcal{G}_{\theta^{*}_Q,j})=1$.
Similarly, there exists a training sample \( x^{(T)}_{l} \) such that \( C_{2,l} > C_{1,l} \). This leads to a contradiction, because by assumption,
\(
\frac{C_{1,k}}{C_{2,k}} = \frac{C_{1,l}}{C_{2,l}} > 0 \quad \text{for all } k, l \text{ such that } C_{2,k} \ne 0 \text{ and } C_{2,l} \ne 0.
\)
If \( C_{2,k} = 0 \), then it must be that \( C_{1,k} = 0 \) as well;  contradicting  \( C_{1,k} > C_{2,k} \). Therefore, if \( \cos(\mathcal{G}_{\omega,i}, \mathcal{G}_{\omega,j}) = 1 \), it follows that \( \cos(\mathcal{G}_{\theta^{*}_Q,i}, \mathcal{G}_{\theta^{*}_Q,j}) = 1 \) as well.

\end{proof}
\subsection{Visualization of generated images}
We visualize sample images generated from the full-precision model, as well as from quantized models obtained using the Q-Diffusion~\citep
{Q-diffusion} method, the TFMQ~\citep{TFMQ-DM} method, and our proposed method with the W4A32 setting, all initialized with a fixed random seed. As shown in  Figure~\ref{fig:visualization_generated_image_LSUNBED}, our proposed method generates images that closely match those of the full-precision models, demonstrating the effectiveness of our approach.

\begin{figure*}[h!]
    \centering
    
    % Row 1
     \begin{subfigure}[t]{\textwidth}
        \centering
        \begin{subfigure}[t]{0.19\textwidth}
            \centering
            \includegraphics[width=\textwidth]{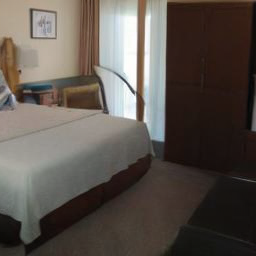}
        \end{subfigure}
        \begin{subfigure}[t]{0.19\textwidth}
            \centering
            \includegraphics[width=\textwidth]{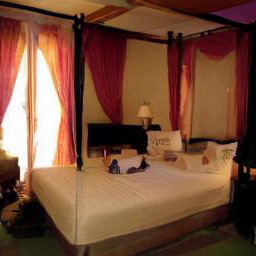}
        \end{subfigure}
        \begin{subfigure}[t]{0.19\textwidth}
            \centering
            \includegraphics[width=\textwidth]{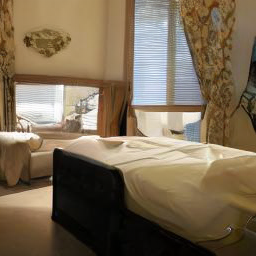}
        \end{subfigure}
        \begin{subfigure}[t]{0.19\textwidth}
            \centering
            \includegraphics[width=\textwidth]{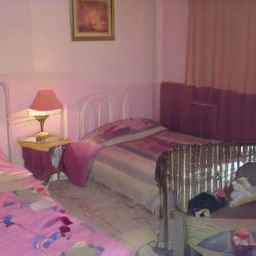}
        \end{subfigure}
        \begin{subfigure}[t]{0.19\textwidth}
            \centering
            \includegraphics[width=\textwidth]{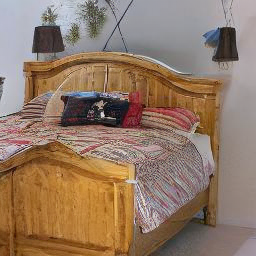}
        \end{subfigure}
        \subcaption{Full precision.}
    \end{subfigure}
    
    \vspace{1em} % Add some vertical space between rows

      \begin{subfigure}[t]{\textwidth}
        \centering
        \begin{subfigure}[t]{0.19\textwidth}
            \centering
            \includegraphics[width=\textwidth]{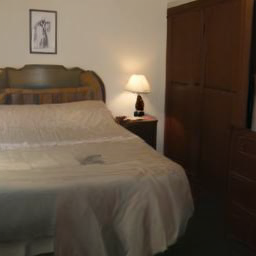}
        \end{subfigure}
        \begin{subfigure}[t]{0.19\textwidth}
            \centering
            \includegraphics[width=\textwidth]{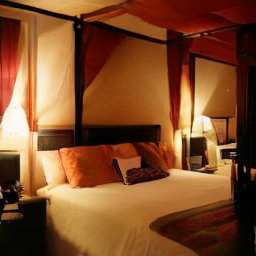}
        \end{subfigure}
        \begin{subfigure}[t]{0.19\textwidth}
            \centering
            \includegraphics[width=\textwidth]{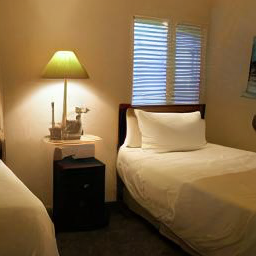}
        \end{subfigure}
        \begin{subfigure}[t]{0.19\textwidth}
            \centering
            \includegraphics[width=\textwidth]{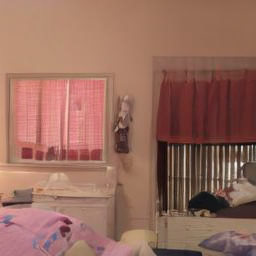}
        \end{subfigure}
        \begin{subfigure}[t]{0.19\textwidth}
            \centering
            \includegraphics[width=\textwidth]{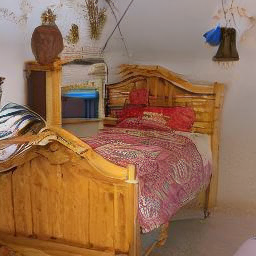}
        \end{subfigure}
        \subcaption{Q-Diffusion (W4A32).}
    \end{subfigure}
    
    \vspace{1em} % Add some vertical space between rows
    % Row 2
    \begin{subfigure}[t]{\textwidth}
        \centering
        \begin{subfigure}[t]{0.19\textwidth}
            \centering
            \includegraphics[width=\textwidth]{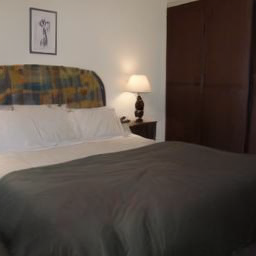}
        \end{subfigure}
        \begin{subfigure}[t]{0.19\textwidth}
            \centering
            \includegraphics[width=\textwidth]{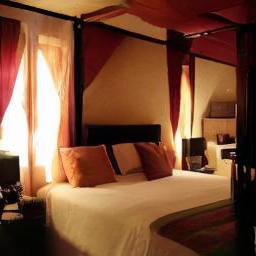}
        \end{subfigure}
        \begin{subfigure}[t]{0.19\textwidth}
            \centering
            \includegraphics[width=\textwidth]{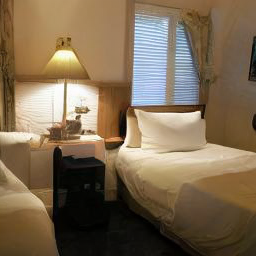}
        \end{subfigure}
        \begin{subfigure}[t]{0.19\textwidth}
            \centering
            \includegraphics[width=\textwidth]{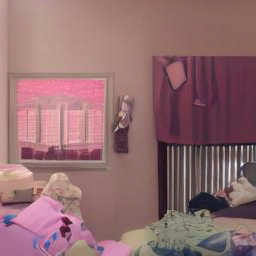}
        \end{subfigure}
        \begin{subfigure}[t]{0.19\textwidth}
            \centering
            \includegraphics[width=\textwidth]{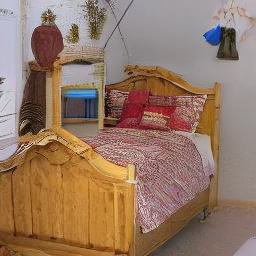}
        \end{subfigure}
        \subcaption{TFMQ-DM (W4A32).}
    \end{subfigure}
    
    \vspace{1em} % Add some vertical space between rows

    % Row 3
    \begin{subfigure}[t]{\textwidth}
        \centering
        \begin{subfigure}[t]{0.19\textwidth}
            \centering
            \includegraphics[width=\textwidth]{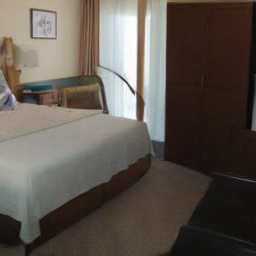}
        \end{subfigure}
        \begin{subfigure}[t]{0.19\textwidth}
            \centering
            \includegraphics[width=\textwidth]{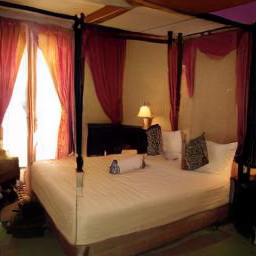}
        \end{subfigure}
        \begin{subfigure}[t]{0.19\textwidth}
            \centering
            \includegraphics[width=\textwidth]{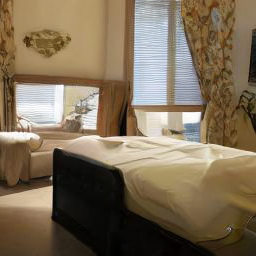}
        \end{subfigure}
        \begin{subfigure}[t]{0.19\textwidth}
            \centering
            \includegraphics[width=\textwidth]{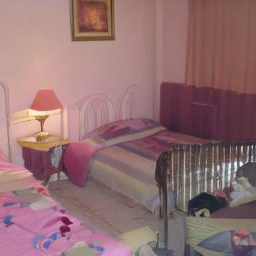}
        \end{subfigure}
        \begin{subfigure}[t]{0.19\textwidth}
            \centering
            \includegraphics[width=\textwidth]{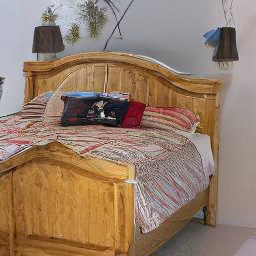}
        \end{subfigure}
        \subcaption{Our proposed method (W4A32).}
    \end{subfigure}
    
    \caption{Generated samples from (a) full-precision LDM-4, (b) Q-Diffusion (W4A32), (c) TFMQ-DM (W4A32), and (d) our proposed method (W4A32) on LSUN-Bedrooms $256 \times 256$ dataset with a fixed random seed.}
    \label{fig:visualization_generated_image_LSUNBED}
\end{figure*}

\subsection{Final Algorithm}
\begin{algorithm}[H]
\caption{ Sample Weights Optimization for Diffusion Quantization}
\label{alg:data_optimization}
\begin{algorithmic}[1]
\STATE \textbf{Input:} Full-precision model \( \theta_{FP} \); number of timesteps \( \mathbf{T} \); samples per timestep \( B \); learning rate for model quantization \( \eta \); learning rate for sample weights \( \eta_{\omega} \); total iterations \( I \)
\STATE Initialize quantized model \( \theta_Q \gets \theta_{FP} \)
\STATE Initialize sample weights \( \omega_{i} \gets \frac{1}{B} \quad \forall i \in \{1,\ldots,\mathbf{T}\} \)

\STATE \(\omega^{(0)} = \omega\)
\FOR{iteration \( i = 1 \) to \( I \)}
    \STATE Sample timestep \( t \sim \{1,\ldots,\mathbf{T}\} \) \hfill // \textit{randomly choose a timestep} 
    \STATE \( \theta^{*}_Q = \theta_Q - \eta \cdot \sum_{j=1}^B \omega_{j} \cdot \nabla_{\theta_Q} \mathcal{L}_{\text{MSE}}(\theta_Q, \theta_{FP}, X^{(T)}) \) \hfill // \textit{one-step-ahead model} 
    \FOR{ \( j = 1 \) to \( B \)}
        \STATE \( \omega_{j}^{(i)} = \omega^{(i-1)}_{j} - \eta \cdot \nabla_{\omega^{(i-1)}_{j}} \mathcal{L}_{\text{MSE}}(\theta^{*}_Q, \theta_{FP}, X^{(V)}_{i}) \) \hfill // \textit{pseudo update $\omega$ with $X^{(V)}_{i}$}   
    \ENDFOR

    \IF{\( i \mod T = 0 \)}
        \FOR{ \( j = 1 \) to \( B \)}
            \STATE \( \omega_{j} \gets \omega^{(0)}_{j} + \eta_{\omega} \cdot \frac{1}{\mathbf{T}} \left( \omega^{(T)}_{j} - \omega^{(0)}_{j} \right) \) \hfill  // \textit{final update of $\omega$}
        \ENDFOR
        \STATE \(\omega^{(0)} = \omega\)
    \ENDIF
\ENDFOR

\STATE \textbf{Return:}  \( \omega \)
\end{algorithmic}
\end{algorithm}

\subsection{Additional Results}

\paragraph{Ablation studies for the number of timestep groups.} 
In practice, we uniformly divide timesteps into 5 groups for simplicity across all datasets. To investigate the effectiveness of our method under different timestep groupings, we evaluate it on the CIFAR-10 dataset using the W4A32 setting. Table~\ref{tab:timestep_group} presents ablation results for varying numbers of groups. We observe that increasing the number of groups does not significantly improve performance but may introduce additional computational overhead.

\begin{table}[H]
\small
  \caption{Ablation studies for the number of timestep groups. The results are on the CIFAR-10 dataset with the W4A32 setting.}
  \centering
  \begin{tabular}{l|ccccc}
    \hline 
    Group & $1$ & $2$ & $5$ & $10$ & $20$ \tabularnewline
    \hline
    FID $\downarrow$ & 4.63 & 4.57 & 4.28 & 4.25 & 4.26 \\
    sFID $\downarrow$ & 4.61 & 4.63 & 4.56 & 4.57 & 4.56 \\
    \bottomrule
  \end{tabular}
  \label{tab:timestep_group}
\end{table}
\paragraph{Evaluation on extreme bit settings}
To further investigate the effectiveness of our method under extreme low-bit settings, we report additional results on the ImageNet $256 \times 256$ dataset using the LDM-DM model with 3/6 and 2/4 configurations. The results are summarized in Table~\ref{tab:ldm_lowbit_results}. As shown, our method achieves substantial improvements over baseline approaches, even under these highly constrained quantization regimes.

\begin{table}[H]
\centering
\caption{Additional results on low-bit settings for unconditional image generation with LDM-4 on ImageNet 256$\times$256. Results are taken from \citep{MPQ_DM}}
\setlength{\tabcolsep}{8pt} % adjust column spacing
\renewcommand{\arraystretch}{1.2} % row height
\resizebox{\textwidth}{!}{\begin{tabular}{l|c|ccc}
\toprule
\textbf{Methods} & \textbf{Bits (W/A)} & \textbf{FID $\downarrow$} & \textbf{sFID $\downarrow$} & \textbf{Precision $\uparrow$}  \\
\midrule
Full Prec.      & 32/32 & 9.36   & 8.67   & --      \\
\midrule
PTQD \citep{PTQD}        &      & 17.98  & 57.31  & 63.13  \\
TFMQ-DM \citep{TFMQ-DM}     & 3/6   & 15.90  & 40.63  & 67.42   \\
Ours            &      & \textbf{10.65}  & \textbf{11.94}  & \textbf{72.57}   \\
\midrule
PTQD \citep{PTQD}        &      & 336.57 & 288.42 & 0.01   \\
TFMQ-DM \citep{TFMQ-DM}     & 2/4   & 300.03 & 272.64 & 0.03    \\
Ours            &      & \textbf{226.27} & \textbf{102.83} & \textbf{0.08}    \\
\bottomrule
\end{tabular}}
\label{tab:ldm_lowbit_results}
\end{table}

\subsection{Visualization of the change in loss across group.}
We visualize the difference in training loss between the quantized diffusion model trained with our sample-weighting strategy and one trained with uniform sample weights, using the CIFAR-10 dataset under the 4/32 quantization setting. The comparison is made across groups of timesteps sorted by ascending loss. As shown in Figure~\ref{fig:task_loss}, our method consistently reduces the loss in groups that tend to be under-optimized when uniform weights are used. This supports our motivation to assign sample weights that mitigate gradient conflicts, preventing the model from over-optimizing certain timesteps at the expense of others.
 \begin{figure}[h]
  \centering
  \includegraphics[width=\linewidth]{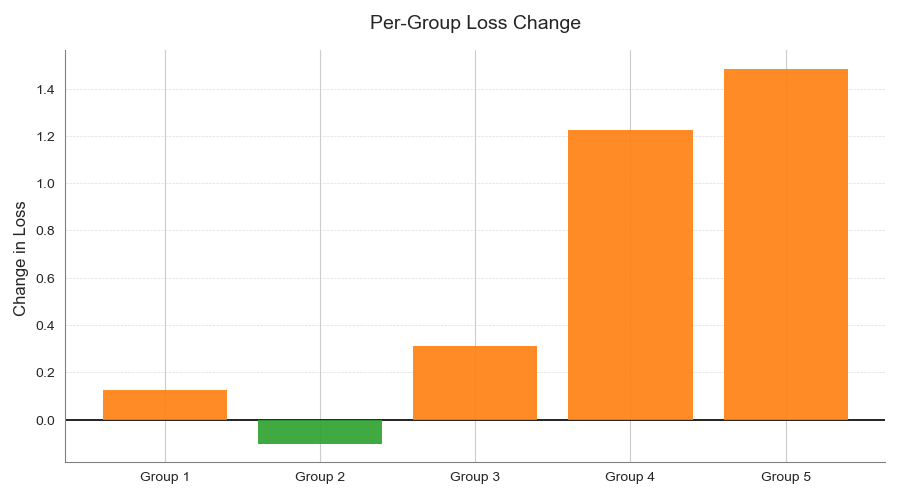}
   \caption{Visualization of loss differences across timestep groups on the CIFAR-10 calibration set (4/32 setting) for quantized models trained with and without our sample-weighting method. Data are grouped  by timesteps into five categories  during sample weight optimization and are shown in \textbf{ascending} order of loss. Orange bars indicate loss reduction with our method, while green bars indicate an increase. The results demonstrate that our approach effectively reduces loss for under-optimized timesteps, addressing the gradient conflict issue that leads to the neglect of certain timestep}
   \label{fig:task_loss}
\end{figure}

We visualize the difference in training loss between the quantized diffusion model trained with our sample-weighting strategy and one trained with uniform sample weights, using the CIFAR-10 dataset under the 4/32 quantization setting. The comparison is made across groups of timesteps sorted by \textbf{ascending} loss. As shown in Figure~\ref{fig:task_loss}, our method consistently reduces the loss in groups that tend to be under-optimized when uniform weights are used. This supports our motivation to assign sample weights that mitigate gradient conflicts, preventing the model from over-optimizing certain timesteps at the expense of others.

%You may include other additional sections here.

\end{document}